\documentclass[preprint,12pt]{elsarticle}

\usepackage{amssymb}
\usepackage{amsmath}
\usepackage{cases}
\usepackage{float} 
\usepackage{subfig}
\usepackage[ruled]{algorithm2e} 
\usepackage{diagbox}
\usepackage{verbatim}

\newtheorem{theorem}{Theorem}[section]

\newtheorem{lemma}[theorem]{Lemma}
\newtheorem{definition}{Definition}[section]

\newenvironment{proof}{{\noindent\it Proof.}\quad}{\hfill $\square$\par}

\def\d{\,\mathrm{d}}

\usepackage{multirow}%提供跨列命令\multicolumn{}{}{}

\usepackage[commandnameprefix=always]{changes}
\journal{***}

\begin{document}

\begin{frontmatter}

%% Title, authors and addresses

%% use the tnoteref command within \title for footnotes;
%% use the tnotetext command for theassociated footnote;
%% use the fnref command within \author or \affiliation for footnotes;
%% use the fntext command for theassociated footnote;
%% use the corref command within \author for corresponding author footnotes;
%% use the cortext command for theassociated footnote;
%% use the ead command for the email address,
%% and the form \ead[url] for the home page:
%% \title{Title\tnoteref{label1}}
%% \tnotetext[label1]{}
%% \author{Name\corref{cor1}\fnref{label2}}
%% \ead{email address}
%% \ead[url]{home page}
%% \fntext[label2]{}
%% \cortext[cor1]{}
%% \affiliation{organization={},
%%             addressline={},
%%             city={},
%%             postcode={},
%%             state={},
%%             country={}}
%% \fntext[label3]{}

\title{A Novel Tensor Product-Based Neural Network for Solving Partial Differential Equations}

\author[1]{Qihong Yang}
% \orcid{0000-0002-8398-7212}
\ead{yangqh0808@163.com}
% Address/affiliation
\affiliation[1]{organization={School of Mathematics, Sichuan University},%Department and Organization
            % addressline={}, 
            city={Chengdu},
            postcode={610065}, 
            % state={},
            country={China}}

% Second author
\author[1]{Yangtao Deng}
\ead{ytdeng1998@foxmail.com}

\author[1]{Qiaolin He \corref{cor1}}
% \cormark[1]
\ead{qlhejenny@scu.edu.cn}

\author[1]{Shiquan Zhang}
% \cormark[1]
\ead{shiquanzhang@scu.edu.cn}

% Corresponding author text
\cortext[cor1]{Corresponding author}
% \cortext[cor2]{Principal corresponding author}

% %% Author affiliation
% \affiliation{organization={},%Department and Organization
%             addressline={}, 
%             city={},
%             postcode={}, 

%             state={},
%             country={}}

%% Abstract
\begin{abstract}
This paper presents the Tensor Product Network (TPNet), a novel neural architecture for efficient and accurate function approximation and PDE solving. The core of the proposal involves constructing the solution explicitly as a linear combination of basis functions integrated into the network, with coefficients determined by a direct least-squares solve, thereby bypassing traditional gradient-based training. The key methodological contribution include: (1) an efficient tensor-product scheme that generates multi-dimensional basis functions from combinations of two sets of subnetwork outputs, significantly reducing model complexity and parameter count while maintaining expressivity; (2) a block time-marching strategy to improve computational efficiency in long-time simulations; and (3) a linear reformulation strategy for handling nonlinear PDEs by treating known nonlinear terms as sources. TPNet achieves superior accuracy and shorter training times than conventional neural network solvers. This performance gain stems from its structured design and deterministic least-squares fitting, which contrast with the iterative, often computationally intensive optimization required by mainstream methods like Physics-Informed Neural Networks (PINNs).    
\end{abstract}

%%Graphical abstract
% \begin{graphicalabstract}
% \end{graphicalabstract}

%%Research highlights
% \begin{highlights}
% \item Research highlight 1
% \item Research highlight 2
% \end{highlights}

%% Keywords
\begin{keyword}
    Neural networks \sep Function approximation \sep Tensor Product \sep Least squares method \sep Partial differential equations
\end{keyword}

\end{frontmatter}

%% Add \usepackage{lineno} before \begin{document} and uncomment 
%% following line to enable line numbers
%% \linenumbers

%% main text
%%

% 引言 Introduction
\section{Introduction}
\label{sec:introduction}
Neural networks have become indispensable in scientific computing, driven by advancements in computational power and the integration of numerical methods for solving partial differential equations (PDEs). These methods have demonstrated significant success across a wide range of applications, including eigenvalue problems \cite{finol2019deep, han2020solving, ma2023pmnn, wang2024computing, guo2024deep}, nuclear engineering \cite{yang2023data, yang2023physics, bo2025research, li2024research}, and uncertainty quantification  \cite{kabir2018neural, oszkinat2022uncertainty, gao2022wasserstein, zou2024neuraluq, gawlikowski2023survey}, as evidenced by numerous studies.

Recently, the Physics-Informed Neural Network (PINN), introduced by Raissi et al.\cite{PINN}, is one of the most widely adopted models for solving PDEs and inverse problems across various fields. Unlike traditional methods, PINN adopts a collocation framework to minimize the residuals of the governing equations, initial conditions, and boundary conditions at strategically sampled points. The neural network is trained to minimize the total loss, yielding an approximate solution upon convergence. Similarly, the Deep Galerkin Method (DGM), proposed by Sirignano et al.\cite{sirignano2018dgm}, approximates solutions by minimizing residuals, employing an approach analogous to PINN. In contrast, the Deep Ritz Method (DRM), introduced by Yu et al. \cite{yu2018deep}, addresses a variational formulation of the original PDE, providing weak solutions rather than the strong-form solutions produced by PINN and DGM.

Following the introduction of classic neural network models, several innovative variants have been developed to address a broader range of problems. One such variant is the Fractional Physics-Informed Neural Network (fPINN), proposed by Pang et al. \cite{pang2019fpinns}, which extends the PINN framework to solve space-time fractional advection-diffusion equations. The fPINN employs a hybrid approach that combines automatic differentiation \cite{baydin2018automatic} for integer-order operators with numerical differentiation \cite{strikwerda2004finite} for fractional operators. A notable methodological advancement emerges in the Augmented Lagrangian Relaxation Method for Physics-Informed Neural Networks (AL-PINN) \cite{son2023enhanced}, which systematically integrates initial and boundary conditions through constraint formulation in the optimization process.
AL-PINN evaluates the discrepancy of the governing equations at specified collocation points, thereby enhancing solution accuracy. In addition to these, numerous other PINN variants \cite{chiu2022can, bai2023physics, lu2021deepxde, tang2023pinns, ABBASI2024128352, miao2023vc} have emerged, integrating traditional numerical methods and novel sampling techniques to achieve promising results. 
Similarly, the demonstrated efficiency of DRM has motivated subsequent refinements \cite{uriarte2023deep, CiCP-29-1365, CiCP-31-1162}. However, establishing rigorous theoretical foundations for neural network-based PDE solvers continues to present significant challenges. Current investigations prioritize systematic error analysis and convergence characterization for both PINNs and DRM \cite{CiCP-28-2042, qian2023physics, lu2021priori, S021953052350015X, CiCP-31-1020}, yielding enhanced understanding of their fundamental capabilities and operational constraints. Emerging from recent developments, Kolmogorov-Arnold Networks (KANs) \cite{liu2024kan, liu2024kan2, so2024higher, qiu2024relu, wang2024kolmogorov, AFZALAGHAEI2025129414} have established a groundbreaking computational framework rooted in the Kolmogorov-Arnold representation theorem. Although demonstrating superior approximation capabilities compared to conventional architectures, these networks confront a persistent challenge shared by neural PDE solvers: systematic precision enhancement through error-controlled convergence.

The development of multi-stage neural architectures by Wang et al. \cite{wang2024multi} has established unprecedented precision benchmarks in scientific computing, stimulating widespread research interest. This architectural innovation implements a stratified training paradigm where successive neural modules systematically attenuate residual errors through phase-constrained optimization. The mathematically guaranteed successive error correction mechanism enables monotonic accuracy enhancement across iterations.
Nevertheless, compared to the original PINN, this approach is expected to demand substantially more time and memory for training. Despite the increased computational requirements, the potential for improved accuracy makes multi-stage neural networks a compelling area of research in scientific computing. Multi-level neural architectures \cite{aldirany2024multi} embody cognate design principles with their multi-stage counterparts. Diverging from this conceptual alignment, their core computational innovation lies in spectral error mitigation through adaptive error renormalization --- a mechanism specifically engineered to resolve high-frequency numerical artifacts arising from residual propagation while maintaining stage-wise error magnitude equilibrium.

Neural architectures employing least-squares optimization frameworks demonstrate marked computational superiority over conventional PINN formulations, particularly in matrix-based computation strategies. This efficiency advantage principally stems from their theoretically grounded approach to residual minimization, which circumvents the spectral bias inherent in traditional neural PDE solvers.
One notable example is the Local Extreme Learning Machine (local ELM), proposed by Dong et al. \cite{dong2021local}. This method incorporates concepts from ELM \cite{huang2006extreme, ding2014extreme, huang2015trends, wang2022review}, domain decomposition techniques \cite{chan1994domain, smith1997domain, dolean2015introduction}, and localized neural networks. In parallel developments, the Random Feature Method (RFM), introduced by Chen et al. \cite{chen2024optimization, chenel2023, chen2022bridging}, implements PDE solutions through randomized least-squares approximation while maintaining domain decomposition as its architectural  eliminating conventional neural network training paradigms.
Recently, a novel framework called Randomized Neural Networks with Petrov-Galerkin methods (RNN-PG) \cite{shang2023randomized, shang2024randomized, shang2023randomized2, wang2024randomized} has been developed to address both linear and nonlinear PDEs. Unlike previous methods, RNN-PG solves a variational formulation of the problem, with the trial function space represented by neural networks and the test function space selected from finite-dimensional spaces, such as finite element spaces. Although RNN-PG does not explicitly require domain decomposition, it introduces nontrivial computational complexity in handling multi-dimensional integration across decomposed regions, particularly when maintaining consistency at overlapping interfaces.

We present TPNet, an innovative tensor product-based neural architecture that systematically simplifies the algorithmic realization of numerical solutions through dimensionally decoupled operations in this work. By fundamentally bypassing domain decomposition requirements, this framework achieves polynomial complexity reduction while maintaining approximation fidelity.
TPNet employs a set of basis functions derived from the outputs of two randomly initialized neural networks to approximate numerical solutions. Specifically, the networks learn to represent a function as a linear combination of these basis functions, with the weights optimized using the least squares method. Additionally, TPNet incorporates a block time-marching strategy, making it well-suited for long-time simulations. 

The core innovation of our method resides in hierarchical basis functions constructed via tensor products of two stochastically parameterized neural architectures. These networks collectively establish a Banach space embedding where target functions are represented as:
\begin{equation}
    u(\boldsymbol{x}) = \sum_{i=1}^{p} \sum_{j=1}^{p} w_{ij} \phi_{1i}(\boldsymbol{x}) \phi_{2j}(\boldsymbol{x}),  \label{eq:novel}
\end{equation}
with optimal weights $\mathbf{\omega}_{ij}$ determined through regularized least squares minimization. The proposed method exhibits several advantages: (i) implementation simplicity requiring minimal code infrastructure; (ii) intrinsic mesh-free formulation eliminating geometric discretization; (iii) provable convergence guarantees under Lipschitz continuity assumptions. Compared to existing least-squares-based neural networks, our approach differs fundamentally in how the basis functions are constructed. Rather than using the raw outputs of a single neural network as basis functions, TPNet forms its basis set by taking the tensor product of the outputs from two separate subnetworks. Moreover, while most existing methods rely on a single, often deep or wide neural network to achieve sufficient approximation power, TPNet achieves comparable or greater basis richness using two lightweight subnetworks. Crucially, this is accomplished with a similar or even smaller parameter count. As a result, TPNet not only reduces model complexity but also achieves higher solution accuracy and faster training, as demonstrated in our numerical experiments. This efficiency–accuracy trade-off highlights a key advantage of the tensor-product design over traditional randomized or deep neural solvers in the least-squares framework.

The rest of the article is organized as follows. The neural networks involved in this work are outlined in Section \ref{sec:problems}. In Section  \ref{sec:methods}, we introduce the TPNet framework with a focus on its capacity to handle both linear and nonlinear PDEs. To validate the proposed methodology, Section \ref{sec:experiments} provides comprehensive numerical experiments across multiple benchmark problems. Finally, the article concludes with Section \ref{sec:summary}, which summarizes key contributions, discusses broader implications of the research, and outlines promising directions for future investigation in this domain.

% 问题 Problems
\section{Preliminaries}
\label{sec:problems}
\subsection{Neural Networks}
In this section, we introduce the fundamental concepts relevant to this work.

\subsubsection{Extreme Learning Machine}
The ELM is a single-hidden-layer feedforward neural network introduced by Huang \cite{huang2006extreme} in 2006. Consider a network with an input layer consisting of $d$ neurons, corresponding to the input variables $\boldsymbol{x} \in \mathbb{R}^d$. The key idea behind the ELM is to randomly initialize the weights $\boldsymbol{W} \in \mathbb{R}^{M \times d}$ and biases $\boldsymbol{b} \in \mathbb{R}^{M}$ between the input layer and the hidden layer. The weights $\boldsymbol{w} \in \mathbb{R}^{M}$ connecting the hidden layer to the output layer are then determined analytically. Mathematically, the function represented by ELM is given by
\begin{equation}
    \label{eq:ELM}
    u(\boldsymbol{x}) = \boldsymbol{w}^T \sigma(\boldsymbol{W}\boldsymbol{x}+\boldsymbol{b}),
\end{equation}
where $\sigma$ is the activation function, applied element-wise to the vector.

\subsubsection{Fully Connected Feedforward Neural Networks}
The fully connected feedforward neural network (FCNN), also known as the multilayer perceptron (MLP) \cite{taud2018multilayer}, is a fundamental architecture in deep learning. It consists of multiple fully connected layers, where each neuron is connected to all neurons in the preceding layer. 

An MLP typically comprises an input layer, multiple hidden layers, and an output layer. Suppose the network has $L+2$ layers, where $L$ represents the number of hidden layers, and the $l$-th layer contains $m_l$ neurons for $l=0,\cdots, L+1$. The weights and biases between the $l$-th and $(l-1)$-th layers are represented as $\boldsymbol{W}^{(l, l-1)} \in \mathbb{R}^{m_{l} \times m_{l-1}}$ and $\boldsymbol{b}^{(l)} \in \mathbb{R}^{m_{l}}$, respectively. Given an input $\boldsymbol{y}^{(0)} = \boldsymbol{x}$, the forward propagation of the MLP is formulated as follows:
\begin{numcases}{}
    \boldsymbol{y}^{(l)} = \sigma(\boldsymbol{W}^{(l, l-1)} \boldsymbol{y}^{(l-1)} + \boldsymbol{b}^{(l)}),  \  \ l = 1,2,...,L,    \label{eq:MLP_hidden}    \\
    u(\boldsymbol{x}) = \boldsymbol{W}^{(L+1, L)} \boldsymbol{y}^{(L)} + \boldsymbol{b}^{(L+1)},  \label{eq:MLP_u}
\end{numcases}
where $\boldsymbol{y}^{(l)}$ ($l=1, \cdots, L$) represents the hidden layers activations and $u$ is the network output.

\subsubsection{Residual Neural Networks}
Residual neural network (ResNet), introduced  in \cite{he2016deep}, is an extension of FCNN. The key innovation of ResNet is the introduction of a residual learning framework, which helps mitigate the issues of vanishing and exploding gradients in deep neural network training.

A residual connection is a skip connection that links previous layers to the current layer, facilitating gradient flow and improving optimization. In this work, the residual connection originates primarily from the preceding layer, and the network formulation is given by
\begin{numcases}{}
    \boldsymbol{y}^{(l)} = \sigma \left( \sigma(\boldsymbol{W}^{(l, l-1)} \boldsymbol{y}^{(l-1)} + \boldsymbol{b}^{(l)}) + \boldsymbol{H}^{(l)}(\boldsymbol{y}^{(l-1)}) \right), \  \ l = 1,2,...,L,      \label{eq:ResNet_hidden}    \\
    u(\boldsymbol{x}) = \boldsymbol{W}^{(L+1, L)} \boldsymbol{y}^{(L)} + \boldsymbol{b}^{(L+1)},  \label{eq:ResNet_u}
\end{numcases}
where $\boldsymbol{H}^{(l)}$ is a linear mapping operator in the $l$-th hidden layer. When the dimensions of $y^{(l-1)}$ and $y^{(l)}$ are identical, $\boldsymbol{H}^{(l)}$ is an identical mapping. Otherwise, it is defined as $\boldsymbol{H}^{(l)}=\boldsymbol{W}^{(l, l-1)}_l \boldsymbol{y}^{(l-1)} + \boldsymbol{b}^{(l)}_l$, where $\boldsymbol{W}^{(l, l-1)}_l \in \mathbb{R}^{m_l \times m_{l-1}}$ and $\boldsymbol{b}^{(l)}_l \in \mathbb{R}^{m_l}$.

\subsubsection{Hidden-layer Concatenated ELM}
The Hidden-Layer Concatenated Extreme Learning Machine (HLConcELM), introduced in \cite{ni2023numerical}, is a novel method based on a modified feedforward neural network (HLConcFNN). This approach incorporates a logical concatenation of hidden layers, where the output layer neurons are formed by concatenating all neurons from the hidden layers. HLConcELM addresses the limitations of conventional ELM and demonstrates high accuracy in solving PDEs.

In this work, HLConcELM is assumed to have two hidden layers, as illustrated in Figure \ref{fig:HLConcELM}. The outputs of these hidden layers is concatenated by the concatenate operation $\textbf{c}$. Let the weights and biases in the hidden layers be $\boldsymbol{W}^{(l, l-1)} \in \mathbb{R}^{m_l \times m_{l-1}}$ and $\boldsymbol{b}^{(l)} \in \mathbb{R}^{m_l}$, where $l=1, 2$. The network formulation is given by
\begin{numcases}{}
    \boldsymbol{y}^{(l)} = \sigma(\boldsymbol{W}^{(l, l-1)} \boldsymbol{y}^{(l-1)} + \boldsymbol{b}^{(l)}), \ \ l = 1,2,     \label{eq:HLConcELM_hidden}    \\
    \Phi = [\phi_1, \cdots, \phi_M] %=concatenate(\boldsymbol{y}^{(1)}, 
    = \textbf{c}(\boldsymbol{y}^{(1)},
    \boldsymbol{y}^{(2)}),  \label{eq:HLConcELM_phi}
\end{numcases}
where $\Phi$ represents a set of basis functions, and $M=m_1+m_2$ denotes the total number of basis functions. Finally the function represented by HLConcELM is given by $u(\boldsymbol{x})=\Phi \cdot \boldsymbol{w}$, where $\boldsymbol{w}$ is the weight vector to be determined.

\begin{figure}[htp]
    \centering
    \includegraphics[width=0.8\textwidth]{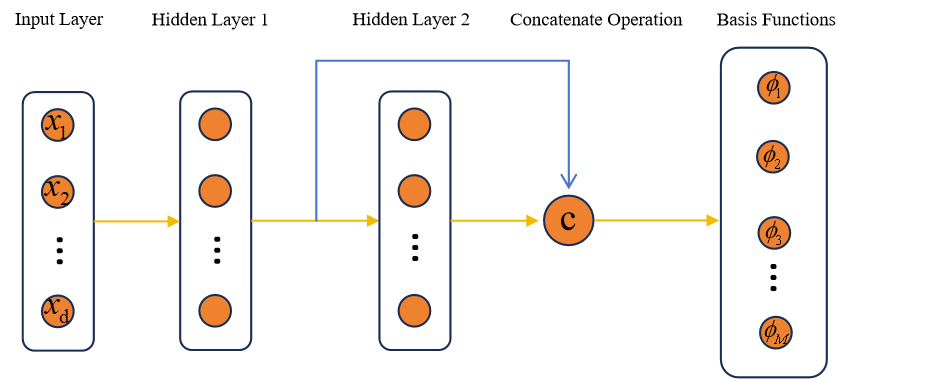}
    \caption{The architecture of HLConcELM \cite{ni2023numerical}, consisting of an input layer, two hidden layers, a concatenation operation, and an output layer.}
    \label{fig:HLConcELM}
\end{figure}

\subsection{The Neural Feature Space}
We conceptualize a neural network as a nonlinear feature mapping that transforms input vectors $\boldsymbol{x} \in \mathbb{R}^d$ into an output space $\mathbb{R}^M$, where $d$ denotes the dimensionality of the input space, and $M$ represents the number of outputs prior to the last layer, defining the dimensionality of the feature representation. From the perspective of approximation theory, the outputs of the neural network can be regarded as forming a basis that is globally supported in $\mathbb{R}^d$. Consequently, the neural feature space, denoted by $\mathcal{P}_{NN}$, is defined as the linear space spanned by the basis functions $\{\phi_j\}$ and each function corresponds to an output of the neural network:
\begin{equation}
    \label{eq:basis}
    \mathcal{P}_{NN} = \text{span}\{\phi_1, \phi_2, \cdots, \phi_M\}.
\end{equation}

Let $\mathcal{F}(\boldsymbol{x}; \boldsymbol{\theta})$ represent a nonlinear feature mapping realized by a neural network, where $\boldsymbol{x} \in \mathbb{R}^d$ is the input vector and $\boldsymbol{\theta}$ comprises the network parameters, including weights and biases. The basis functions can be defined as
\begin{equation}
    \label{eq:basis_f}
    \Phi = [\phi_1, \phi_2, \cdots, \phi_M] = \mathcal{F}(\boldsymbol{x}; \boldsymbol{\theta}).
\end{equation}
The function to be learned  is formulated as
\begin{equation}
    \label{eq:u}
    u(\boldsymbol{x}) = \sum_{i=1}^{M} w_i \phi_i = \Phi \cdot \boldsymbol{w}, 
\end{equation}
where $\boldsymbol{w} = [w_1, w_2, \cdots, w_M]^T$.

\subsection{Randomized Neural Networks}
Randomized neural networks refer to a class of neural networks in which all parameters, except those in the last layer, are randomly initialized and then remain fixed thereafter. In the work \cite{shang2023randomized, shang2024randomized, shang2023randomized2, wang2024randomized}, randomized neural networks have been employed to solve both linear and nonlinear PDEs. Notably, the ELM can be regarded as a special case of a single-hidden-layer randomized neural network.

We consider a neural network that maps a $d$-dimensional input vector $\boldsymbol{x}$ to an $M$-dimensional output represented by the basis functions $\Phi$, as defined in Equation~\eqref{eq:basis_f}. In this framework, the parameters $\boldsymbol{\theta}$ are randomly initialized and remain fixed throughout training, while the unknown parameters $\boldsymbol{w}$ are determined via the least squares method. The function being approximated is expressed as in Equation~\eqref{eq:u}. This equation encapsulates the relationship between the input $\boldsymbol{x}$ and the output $\Phi$, with $\boldsymbol{w}$ being the set of parameters that the network adjusts to minimize the approximation error.

To construct the system for solving $\boldsymbol{w}$, we define a real-valued matrix $\mathbf{A} \in \mathbb{R}^{N \times M}$ whose entries are evaluated from the basis functions $\Phi$ at discrete points in the dataset $S$:
\begin{equation}
    \label{eq:A_basis}
    \begin{aligned}
    \mathbf{A} = & \Phi(S) = (\mathcal{F}(\boldsymbol{x}_1; \boldsymbol{\theta}), \mathcal{F}(\boldsymbol{x}_2; \boldsymbol{\theta}), \cdots, \mathcal{F}(\boldsymbol{x}_N; \boldsymbol{\theta}))^T \\
    = &
    \begin{bmatrix}
        \phi_1(\boldsymbol{x}_1) & \phi_2(\boldsymbol{x}_1) & \cdots & \phi_M(\boldsymbol{x}_1) \\
        \phi_1(\boldsymbol{x}_2) & \phi_2(\boldsymbol{x}_2) & \cdots & \phi_M(\boldsymbol{x}_2) \\
        \vdots & \vdots & \cdots & \vdots \\
        \phi_1(\boldsymbol{x}_N) & \phi_2(\boldsymbol{x}_N) & \cdots & \phi_M(\boldsymbol{x}_N) \\
    \end{bmatrix},
    \end{aligned}
\end{equation}
where $\boldsymbol{x}_i \in S$ for $i=1, 2, \cdots, N$. 

The vector of coefficients $\boldsymbol{w}$ is obtained by solving the linear system $\mathbf{A} \cdot \boldsymbol{w} = \mathbf{F}$, where $\mathbf{F} = [f(\boldsymbol{x}_1), f(\boldsymbol{x}_2), \cdots, f(\boldsymbol{x}_N)]^T$ is the vector of function values at the discrete points. The least squares method is used to solve for $\boldsymbol{w}$, ensuring an optimal approximation of the target function. The complete procedure is summarized in Algorithm~\ref{algo:net_func_app}.

\begin{algorithm}[htp]
    \caption{Randomized neural networks for function approximation}
    \label{algo:net_func_app}
    Let $N$ be the number of points in the dataset $S$, and let $M$ be the number of basis functions, which also corresponds to the layer size in the neural network. \\
    \textbf{Step 1:} Initialize the weights of the network randomly and obtain the basis functions $\Phi$ from the network outputs. \\
    \textbf{Step 2:} Compute the matrix $\mathbf{A} = \Phi(S)$ as defined in Equation~\eqref{eq:A_basis}. \\
    \textbf{Step 3:} Define the approximation of the learned function as $u(\boldsymbol{x}) = \Phi \cdot \boldsymbol{w}$. \\
    \textbf{Step 4:} Construct the right-hand vector $\mathbf{F} = [f(\boldsymbol{x}_1),f(\boldsymbol{x}_2),\cdots,f(\boldsymbol{x}_N)]^T$. \\
    \textbf{Step 5:} Solve the linear system $\mathbf{A} \cdot \boldsymbol{w} = \mathbf{F}$ using the least squares method to find the coefficients $\boldsymbol{w}$. \\
    The function $u(\boldsymbol{x})$ is the approximation of the desired function by the network.
\end{algorithm}

% 方法与算法 Methods and Algorithms
\section{Methodologies}
\label{sec:methods}

\subsection{Neural Networks Based on Tensor Products}
In the architecture of randomized neural networks, various network types, such as ELM, MLP, ResNet, and HLConcELM, can be employed to represent the basis functions denoted by $\Phi$. However, we have observed that when the number of basis functions becomes large, constructing a sufficiently large neural network is often necessary. This is particularly true when solving PDEs, where the computation of partial derivatives for each basis function via automatic differentiation becomes required. Unfortunately, this process can be time-consuming. To address this challenge, inspired by the tensor neural networks introduced in \cite{wjx2023}, we propose the TPNet architecture. TPNet is designed to optimize the process of automatic differentiation and improve network efficiency, significantly reducing computational time.

Suppose that the input vector $\boldsymbol{x}$ is a $d$-dimensional vector. We construct two subnetworks, each processing the $d$-dimensional input independently. The functions represented by these two subnetworks are defined as follows:
\begin{equation}
    \label{eq:TP_basis_i}
    \Phi_k = (\phi_{k1}, \phi_{k2}, \cdots, \phi_{kp}) = \mathcal{F}_k(\boldsymbol{x}; \boldsymbol{\theta}_k), \quad k=1, 2,
\end{equation}
where $\Phi_k$ represents the set of functions generated by the $k$-th subnetwork, $\mathcal{F}_k(\boldsymbol{x}; \boldsymbol{\theta}_k)$ denotes the nonlinear feature mapping performed by the $k$-th subnetwork, and $p$ is the number of output features in each subnetwork. The architecture of the TPNet is illustrated in Figure~\ref{fig:TPNet}.

After obtaining two sets of functions from the subnetworks, we construct the set of basis functions $\Phi$, using the tensor product:
\begin{equation}
    \label{eq:TP_basis}
    \Phi = \Phi_1 \otimes \Phi_2,
\end{equation}
where $\otimes$ denotes the tensor product, defined as:
\begin{equation}
    \label{eq:TP}
    \Phi_1 \otimes \Phi_2 = \{ \phi_{1m} \phi_{2n} \mid 1 \le m, n \le p \}.
\end{equation}
Since each subnetwork produces $p$ functions, the total number of basis functions in $\Phi$ is $M=p^2$.

To compute the first-order partial derivative with respect to $x_i$, %apply the differentiation rule for tensor products:
we have
\begin{equation}
    \label{eq:TP_x}
    \begin{aligned}
    (\Phi)_{x_i} &= (\Phi_1)_{x_i} \otimes \Phi_2 + \Phi_1 \otimes (\Phi_2)_{x_i} \\
    &= \left\{ \frac{  \d  \phi_{1m}}{\d x_i} \phi_{2n} + \left.\phi_{1m} \frac{\d \phi_{2n}}{\d x_i} \right\vert 1 \le m, n \le p  \right\}.
    \end{aligned}
\end{equation}
where $(\Phi)_{x_i}$ represents the set of functions after differentiation with respect to $x_i$. Thus, to obtain the derivatives of the basis functions in $\Phi$, it is sufficient to compute the derivatives of the individual function in $\Phi_1$ and $\Phi_2$ for each variable.

In summary, as illustrated in Figure \ref{fig:TPNet}, TPNet processes the $d$-dimensional input variables through two subnetworks. Both subnetworks receive all 
$d$ input coordinates and produce $p$ outputs each. The complete set of basis functions is constructed by taking the tensor product of these two output vectors, yielding $M = p^2$ basis functions. The PDE solution is then expressed as a linear combination of these 
$M$ basis functions, forming the core concept of TPNet. It should be noted that the two subnetworks in TPNet can adopt any neural network architecture. In this work, we implement them using ELM, MLP, and ResNet architectures, yielding the variants TP-ELM, TP-MLP, and TP-ResNet, respectively. The expressivity of TPNet arises from two dynamic aspects: first, once initialized, the subnetwork parameters remain fixed during training, so the initialization strategy determines the characteristics of the basis functions; second, by generating $M= p^2$ basis functions via tensor product, TPNet can dynamically represent any target solution through an appropriate linear combination of these bases.

\begin{figure}[htp]
    \centering
    \includegraphics[width=0.8\textwidth]{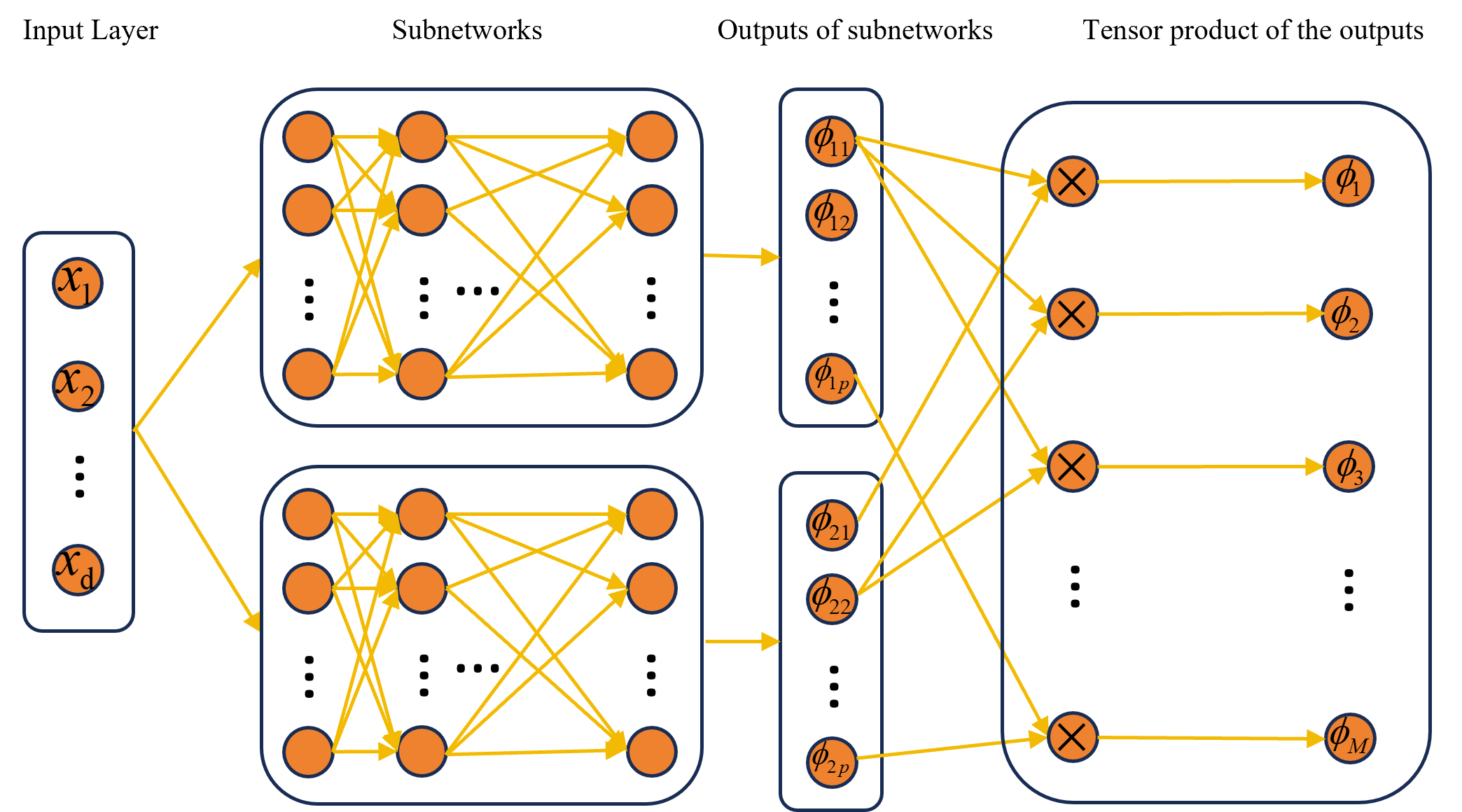}
    \caption{The architecture of TPNet consists of an input layer and two subnetworks. Each subnetwork processes $d$-dimensional input. The basis functions are obtained by taking the tensor product of the subnetwork outputs.}
    \label{fig:TPNet}
\end{figure}

\subsection{Linear Partial Differential Equations}
Consider the following generic linear PDE
\begin{numcases}{}
    \mathcal{L}u= f,    \  \mbox{in} \enspace \Omega,      \label{eq:control_equation}    \\
    \mathcal{B}u = g,   \  \mbox{on} \enspace \partial \Omega,  \label{eq:boundary_condition}
 \end{numcases}
where $u$ represents the scalar field function that we seek to determine, $\mathcal{L}$ is a linear PDE operator defined in the domain $\Omega$, $\mathcal{B}$ is a linear boundary operator defined on the boundary $\partial \Omega$, $f$ is the source term, and Equation \eqref{eq:boundary_condition} represents the boundary condition. Suppose that this linear PDE is well-posed and we employ the TPNet to solve such problems.% and to approximate $u$ as the solution.

We consider the basis functions $\Phi$ generated by the outputs of the TPNet. Applying the linear differential operators $\mathcal{L}$ and $\mathcal{B}$ to these basis functions yields the following expressions:
\begin{numcases}{}
    \mathcal{L}\Phi = (\mathcal{L}\phi_1, \mathcal{L}\phi_2, \cdots, \mathcal{L}\phi_M),      \label{eq:L_ctrl_eq}    \\
    \mathcal{B}\Phi = (\mathcal{B}\phi_1, \mathcal{B}\phi_2, \cdots, \mathcal{B}\phi_M).      \label{eq:B_boun_con}
\end{numcases}
Now, suppose our dataset $S$ consists of $N = N_r + N_b$ collocation points, where $N_r$ represents the number of points in the domain $\Omega$ and $N_b$ is the number of points on the boundary $\partial \Omega$. We divide the collocation points into two subsets: $S_r$ for points inside the domain and $S_b$ for points on the boundary. Using these collocation points, we construct the system matrix $\mathbf{A}$ as follows:
\begin{equation}
    \label{eq:linear_system_left}
    \begin{aligned}
    \mathbf{A} &= 
    \begin{bmatrix}
        \mathcal{L}\Phi(S_r) \\
        \mathcal{B}\Phi(S_b)
    \end{bmatrix} \\
    &= 
    \begin{bmatrix}
        \mathcal{L}\phi_1(\boldsymbol{x}_1) & \mathcal{L}\phi_2(\boldsymbol{x}_1) & \cdots & \mathcal{L}\phi_M(\boldsymbol{x}_1) \\
        \vdots & \vdots & \cdots  & \vdots \\
        \mathcal{L}\phi_1(\boldsymbol{x}_{N_r}) & \mathcal{L}\phi_2(\boldsymbol{x}_{N_r}) & \cdots & \mathcal{L}\phi_M(\boldsymbol{x}_{N_r}) \\
        \mathcal{B}\phi_1(\boldsymbol{x}_{N_r+1}) & \mathcal{B}\phi_2(\boldsymbol{x}_{N_r+1}) & \cdots & \mathcal{B}\phi_M(\boldsymbol{x}_{N_r+1}) \\
        \vdots & \vdots & \cdots  & \vdots \\
        \mathcal{B}\phi_1(\boldsymbol{x}_{N_r+N_b}) & \mathcal{B}\phi_2(\boldsymbol{x}_{N_r+N_b}) & \cdots & \mathcal{B}\phi_M(\boldsymbol{x}_{N_r+N_b}) \\
    \end{bmatrix}.
    \end{aligned}
\end{equation}

As discussed in the previous section, the vector of coefficients $\boldsymbol{w}$ can be determined by solving the linear system: $\mathbf{A} \cdot \boldsymbol{w} = \mathbf{F}$, where the right-hand-side vector $\mathbf{F}$ is defined as:
\begin{equation}
    \label{eq:linear_system_right}
    \boldsymbol{F} = [ f(\boldsymbol{x}_1),  \cdots, f(\boldsymbol{x}_{N_r}), g(\boldsymbol{x}_{N_r +1}), \cdots, g(\boldsymbol{x}_{N_r + N_b}) ]^T.
\end{equation}
Here, $f(\boldsymbol{x}_i)$ ($1 \le i \le N_r$) represents the source term evaluated at the collocation points in the domain $\Omega$ and $g(\boldsymbol{x}_i)$ ($N_r+1 \le i \le N_r+N_b$) represents the boundary condition term evaluated at the collocation points on the boundary $\partial \Omega$. The procedure for applying the TPNet to solve linear PDEs is summarized in Algorithm~\ref{algo:tp_linear_pde} and Figure \ref{fig:TPNet_linear}, which is also suitable for solving other kinds of PDEs.

\begin{algorithm}[htp]
    \caption{TPNet for solving the linear PDEs}
    \label{algo:tp_linear_pde}
    Let $N_r$ and $N_b$ denote the number of points in the domain $\Omega$ and on the boundary $\partial \Omega$, respectively, and let $M$ be the number of basis functions and $p$ be the number of outputs of a subnetwork. Construct the data sets $S_r$ for the domain and $S_b$ for the boundary.    \\
    \textbf{Step 1:} Randomly initialize the weights of the subnetworks and obtain the basis functions $\Phi=\Phi_1 \otimes \Phi_2$. \\
    \textbf{Step 2:} Compute the matrix $\mathbf{A} = [\mathcal{L}\Phi(S_r), \mathcal{B}\Phi(S_b)]^T$ in Equation~\eqref{eq:linear_system_left} using the basis functions evaluated at the collocation points in the data sets $S_r$ and $S_b$. \\
    \textbf{Step 3:} Express the learned function as $u(\boldsymbol{x}) = \Phi \cdot \boldsymbol{w}$, where $\boldsymbol{w}$ is the weight vector to be determined. \\
    \textbf{Step 4:} Construct the right-hand vector $\mathbf{F} = [ f(\boldsymbol{x}_1), \cdots, f(\boldsymbol{x}_{N_r}), g(\boldsymbol{x}_{N_r +1}), \cdots, g(\boldsymbol{x}_{N_r + N_b}) ]^T$. \\
    \textbf{Step 5:} Solve the linear system $\mathbf{A} \cdot \boldsymbol{w} = \mathbf{F}$ using least squares method to find the coefficients $\boldsymbol{w}$. \\
    The solution $u(\boldsymbol{x})$ is  approximated by the TPNet.% providing a solution to the linear PDEs.
\end{algorithm}

\begin{figure}[htp]
    \centering
    \includegraphics[width=0.8\textwidth]{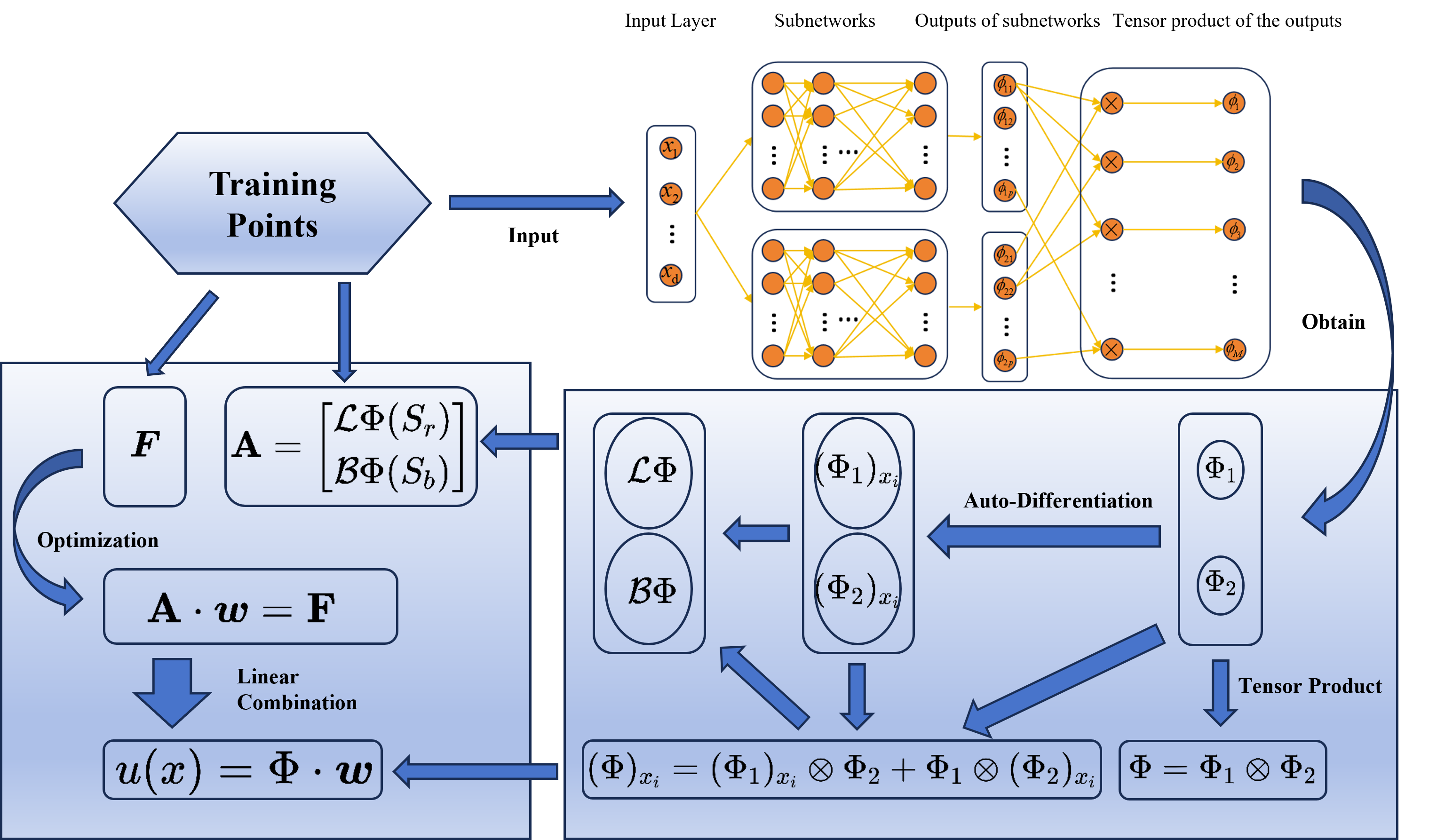}
    \caption{The process of TPNet for solving the linear PDEs.}
    \label{fig:TPNet_linear}
\end{figure}

\subsection{Block Time-Marching for Long-Time Simulations}
In long-time simulations, it has been observed that TPNet may face challenges in accurately approximating solutions over extended periods. As time progresses, the accuracy of the solutions may loss. To address this issue and improve TPNet’s performance in long-time simulations, we propose adopting the block time-marching strategy \cite{dong2021local}. This approach involves decomposing the simulation into smaller, manageable time steps or blocks, thereby incrementally updating the solution at each stage. By systematically propagating the solution through smaller intervals, this strategy improves accuracy and reduces error accumulation, thereby enhancing the solution's precision and stability during extended simulations, which have been validated in numerical experiments in Section \ref{sec:BTM}.

To solve time-dependent PDEs, we consider the following problem in $\Omega \times (0, t_f]$:
\begin{equation}
    \label{eq:time_equation}
    \begin{array}{r@{}l}
        \left\{
        \begin{aligned}
            \mathcal{L}u(\boldsymbol{x}, t) &= f(\boldsymbol{x}, t), &  & \mbox{in} \enspace \Omega \times (0, t_f],          \\
            \mathcal{B}u(\boldsymbol{x}, t) &= g(\boldsymbol{x}, t),        &  & \mbox{on} \enspace \partial \Omega \times [0, t_f], \\
            \mathcal{I}u(\boldsymbol{x}, 0) &= h(\boldsymbol{x}, 0),            &  & \mbox{in} \enspace \Omega,
        \end{aligned}
        \right.
    \end{array}
\end{equation}
where $\mathcal{L}$ and $\mathcal{B}$ are linear differential operators, $\mathcal{I}$ represents the initial condition operator.

To efficiently solve the problem \eqref{eq:time_equation}, we adopt a block time-marching strategy. The time step size is defined as $dt = \frac{t_f}{D_t}$ and $D_t$ is the number of time blocks. The problem is reformulated as a sequence of time-block problems:
\begin{equation}
    \label{eq:time_block_marching}
    \begin{array}{r@{}l}
        \left\{
        \begin{aligned}
            \mathcal{L}u(\boldsymbol{x}, t) &= f(\boldsymbol{x}, t), &  & \mbox{in} \enspace \Omega \times (t_k, t_{k+1}],          \\
            \mathcal{B}u(\boldsymbol{x}, t) &= g(\boldsymbol{x}, t),        &  & \mbox{on} \enspace \partial \Omega \times [t_k, t_{k+1}], \\
            \mathcal{I}u(\boldsymbol{x}, t_k) &= h(\boldsymbol{x}, t_k),            &  & \mbox{in} \enspace \Omega.
        \end{aligned}
        \right.
    \end{array}
\end{equation}
where, $0 \leq k \leq D_t - 1$, $t_k = k \cdot dt$ represents the start time of the $k$-th time block, and the function $h(\boldsymbol{x}, t_k)$ serves as the initial condition for the PDEs at time $t = t_k$ in each time block. By iteratively solving these $D_t$ initial-boundary value problems using the TPNet, we obtain the solution $u(\boldsymbol{x}, t)$ across the entire spatial-temporal domain.

\subsection{Nonlinear Partial Differential Equations}
The procedures of the TPNet algorithm for nonlinear PDEs are fundamentally analogous to those used for linear PDEs. However, a key distinction lies in the iterative process employed to find a convergent weight vector $\boldsymbol{w}$, which enhances the neural network's ability to approximate the exact solution within a desired level of accuracy, which is commonly referred to as the Picard iteration and involves repeatedly applying the neural network with updated weights until the solution converges to a steady state. Each iteration refines the approximation by using the output from the previous iteration as input for the next. By leveraging the Picard iteration, the TPNet can effectively tackle the complexities inherent in nonlinear PDEs, progressively improving the solution approximation with each iteration. The Picard iteration method, also known as the fixed-point iteration method, is an iterative technique for solving nonlinear equations \cite{suli2003introduction}. In this work, we focus on nonlinear PDEs of the form given in Equation \eqref{eq:nonlinear_pde}, which can be solved using the Picard iteration method. This approach has been successfully utilized for solving nonlinear PDEs in RNN-PG \cite{shang2023randomized}.

We consider a class of nonlinear PDE of the form:
\begin{equation}
    \label{eq:nonlinear_pde}
    \begin{array}{r@{}l}
        \left\{
        \begin{aligned}
            \mathcal{L}u(\boldsymbol{x}) + \mathcal{N}[u(\boldsymbol{x})] &= f(\boldsymbol{x}), &  & \mbox{in} \enspace \Omega,          \\
            \mathcal{B}u(\boldsymbol{x}) &= g(\boldsymbol{x}),        &  & \mbox{on} \enspace \partial \Omega. \\
        \end{aligned}
        \right.
    \end{array}
\end{equation}
where $\mathcal{L}$ represents a linear differential operator, $\mathcal{N}$ is a nonlinear operator, %$f(\boldsymbol{x})$ and $g(\boldsymbol{x})$ are the source and boundary terms, respectively, 
and $\mathcal{B}$ is a linear boundary operator.

Given an initial vector of coefficients $\boldsymbol{w}_0$, we approximate the initial solution  as $u_0(\boldsymbol{x}) = \Phi \cdot \boldsymbol{w}_0$, where $\Phi$ denotes the basis functions generated by the neural network. Using this initial solution, the nonlinear term $\mathcal{N}[u_0(\boldsymbol{x})]$ can be evaluated, thereby rendering it a known quantity in the problem. Then the PDE reduces to a linearized equation:
\begin{equation}
    \label{eq:nonlinear_linear_pde}
    \begin{array}{r@{}l}
        \left\{
        \begin{aligned}
            \mathcal{L}u(\boldsymbol{x}) &= f(\boldsymbol{x}) - \mathcal{N}[u_0(\boldsymbol{x})], &  & \mbox{in} \enspace \Omega,          \\
            \mathcal{B}u(\boldsymbol{x}) &= g(\boldsymbol{x}),        &  & \mbox{on} \enspace \partial \Omega. \\
        \end{aligned}
        \right.
    \end{array}
\end{equation}
This linearized equation can be efficiently solved using the TPNet method. By iterating this process, we obtain a convergent coefficient vector $\boldsymbol{w}$, and the final solution to the original nonlinear PDEs is approximated as $u(\boldsymbol{x}) = \Phi \cdot \boldsymbol{w}$.

The procedure for solve nonlinear PDEs using TPNet is summarized in Algorithm~\ref{algo:tp_nonlinear_pde}, which iteratively updates the coefficients until convergence.

\begin{algorithm}[htp]
    \caption{TPNet for solving the nonlinear PDEs}
    \label{algo:tp_nonlinear_pde}
    Define $N_r$ and $N_b$ as the number of points in the domain $\Omega$ and on the boundary $\partial \Omega$, respectively, and let $M$ be the number of basis functions and $p$ be the number of outputs of a subnetwork. Construct the data sets $S_r$ and $S_b$ for the domain and boundary, respectively. \\    
    Let $k_{max}$ be the maximum number of iterations, $\epsilon$ be the convergence tolerance, and $\boldsymbol{w}_0$ be the initial vector of coefficients. \\
    \textbf{Step 1:} Randomly initialize the weights of the subnetworks and obtain the basis functions $\Phi=\Phi_1 \otimes \Phi_2$. \\
    \textbf{Step 2:} Compute the matrix $\mathbf{A} = [\mathcal{L}\Phi(S_r), \mathcal{B}\Phi(S_b)]^T$ according to Equation~\eqref{eq:linear_system_left}. \\
    \textbf{Step 3:} \\
    \For{$k=0, 1, 2, \cdots, k_{max}-1$}{
        Express the current approximation of the solution as $u_k(\boldsymbol{x}) = \Phi \cdot \boldsymbol{w}_k$. \\
        Compute the nonlinear term $\mathcal{N}[u_k(\boldsymbol{x})]$. \\
        Construct the right-hand vector $\mathbf{F} = [ f(\boldsymbol{x}_1)-\mathcal{N}[u_k(\boldsymbol{x}_1)], \cdots, f(\boldsymbol{x}_{N_r})-\mathcal{N}[u_k(\boldsymbol{x}_{N_r})], g(\boldsymbol{x}_{N_r +1}), \cdots, g(\boldsymbol{x}_{N_r + N_b}) ]^T$, incorporating the nonlinear term and boundary conditions. \\
        Solve the linear system $\mathbf{A} \cdot \boldsymbol{w}_{k+1} = \mathbf{F}$ using least squares method to obtain the updated vector of coefficients $\boldsymbol{w}_{k+1}$. \\
       \If{$\lVert \boldsymbol{w}_k - \boldsymbol{w}_{k+1} \rVert < \epsilon \space$}{
            $\boldsymbol{w}=\boldsymbol{w}_{k_{k+1}}$. \\
           The iteration is halted as the stopping criterion is met. \\
        }
    }
    %Upon completion of the iterations, 
    The vector of coefficients $\boldsymbol{w}$ is obtained, %and the solution to the nonlinear PDEs is approximated 
    and the solution is %by the TPNet as 
    $u(\boldsymbol{x}) = \Phi \cdot \boldsymbol{w}$.
\end{algorithm}

\subsection{Comparisons with Other Algorithms}
The training approaches differ significantly. In existing algorithms such as PINN \cite{PINN}, DGM \cite{sirignano2018dgm}, and DRM \cite{yu2018deep}, gradient descent optimizers (e.g., Adam \cite{kingma2014adam} and L-BFGS \cite{liu1989limited}) are primarily employed. By contrast, the TPNet proposed in this work adopts a least squares framework. In this method, the neural network’s parameters are fixed after initialization, and only the coefficients of the linear combination in the output layer --- determined solely via least squares --- require optimization.

The most significant distinction lies in the absence of domain decomposition and the adoption of tensor products to construct basis functions. Algorithms such as RFM \cite{chen2024optimization, chenel2023, chen2022bridging} and RNN-PG rely on techniques like partition of unity (PoU) and mesh generation for domain decomposition. While these methods enable highly accurate solutions for PDEs, they are computationally intensive to implement. Furthermore, when the number of basis functions increases, HLConcELM \cite{ni2023numerical} incurs significant computational costs for derivative calculations via automatic differentiation, thereby reducing efficiency. In contrast, TPNet leverages tensor products to construct basis functions. The derivatives of these basis functions are computed as the tensor product of derivatives from two independent function sets, dramatically improving computational efficiency.

\subsection{Approximation Theory of TPNet}
In this section, to further explore TPNet's approximation capabilities, we assume that the ELM is employed in each subnetwork in the TPNet framework, which is called TP-ELM.
The approximate solution can be represented by the following formula
\begin{equation}
    \label{eq:TP_ELM}
    u_M(\boldsymbol{x}) =  \sum_{i=1}^{p} \sum_{j=1}^{p} w_{ij} \sigma(\boldsymbol{W}_1^i \boldsymbol{x} + \boldsymbol{b}_1^i) \sigma(\boldsymbol{W}_2^j \boldsymbol{x} + \boldsymbol{b}_2^j), \ \forall \boldsymbol{x} \in \Omega \subset \mathbb{R}^d,
\end{equation}
where $\boldsymbol{W}_k \in \mathbb{R}^{p \times d}$, $\boldsymbol{b}_k \in \mathbb{R}^{p}$ and $k=1,2$, which means that the weights and biases come from $k$-th subnetwork, the $\boldsymbol{W}_k^i$ is the $i$-th row in $\boldsymbol{W}_k$ and $\boldsymbol{b}_k^i$ is the $i$-th row in $\boldsymbol{b}_k$, the $w_{ij}$ is the unknown solved by the least squares method. 

\begin{theorem} 
    \label{thm:u_C}
    Let $\sigma$ be any continuous sigmoidal function. The finite sums of
    \begin{equation}
        \label{eq:u_C}
        u_{\mathcal{C}}(\boldsymbol{x}) =  \sum_{i=1}^{M} w_i \sigma(\boldsymbol{W}^i \boldsymbol{x} + \boldsymbol{b}^i)
    \end{equation}
    are dense in $\mathcal{C}(\Omega)$, where $\boldsymbol{W} \in \mathbb{R}^{M \times d}$, $\boldsymbol{b} \in \mathbb{R}^{M}$. In other words, given any $f \in \mathcal{C}(\Omega)$ and $\epsilon > 0$, there exists a sum $u_{\mathcal{C}} (\boldsymbol{x})$, such that
    \begin{equation}
        \label{eq:u_C_to_epsilon}
        \lvert u_{\mathcal{C}}(\boldsymbol{x}) - f(\boldsymbol{x}) \rvert < \epsilon.
    \end{equation}
\end{theorem}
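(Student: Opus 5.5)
This is Cybenko's universal approximation theorem. I would prove it with his Hahn--Banach/Riesz argument, taking $\Omega$ to be compact (for instance $\Omega\subset[0,1]^d$ closed and bounded), since density in $\mathcal{C}(\Omega)$ with the sup norm needs this. Let $S\subset\mathcal{C}(\Omega)$ be the linear span of the ridge functions $\sigma(\boldsymbol{W}^i\boldsymbol{x}+\boldsymbol{b}^i)$. This is exactly the set of finite sums in Theorem \ref{thm:u_C}, and $S$ is a linear subspace. Suppose, for contradiction, that its closure $\bar S$ is a proper subspace. By the Hahn--Banach theorem there is a nonzero bounded linear functional $L$ on $\mathcal{C}(\Omega)$ that vanishes on $\bar S$. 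By the Riesz representation theorem, $L(h)=\int_\Omega h\,\mathrm{d}\mu$ for some nonzero finite signed regular Borel measure $\mu$. Hence
\begin{equation*}
\int_\Omega \sigma(\boldsymbol{y}^T\boldsymbol{x}+\theta)\,\mathrm{d}\mu(\boldsymbol{x})=0\quad\text{for all } \boldsymbol{y}\in\mathbb{R}^d,\ \theta\in\mathbb{R}.
\end{equation*}
The goal is to show that this forces $\mu=0$, i.e.\ that $\sigma$ is \emph{discriminatory}.

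The key step, and the main obstacle, is using the sigmoidal property $\sigma(t)\to1$ as $t\to\infty$ and $\sigma(t)\to0$ as $t\to-\infty$. Fix $\boldsymbol{y},\theta,\varphi$ and consider $\sigma_\lambda(\boldsymbol{x})=\sigma(\lambda(\boldsymbol{y}^T\boldsymbol{x}+\theta)+\varphi)$. As $\lambda\to\infty$ this converges pointwise to $\gamma(\boldsymbol{x})$, where $\gamma=1$ on the open half-space $H^+=\{\boldsymbol{y}^T\boldsymbol{x}+\theta>0\}$, $\gamma=0$ on $H^-=\{\boldsymbol{y}^T\boldsymbol{x}+\theta<0\}$, and $\gamma=\sigma(\varphi)$ on the hyperplane $\Pi=\{\boldsymbol{y}^T\boldsymbol{x}+\theta=0\}$. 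A continuous sigmoidal function is bounded, so the family $\sigma_\lambda$ is uniformly bounded. Dominated convergence with respect to $|\mu|$ then gives
\begin{equation*}
0=\sigma(\varphi)\mu(\Pi)+\mu(H^+).
\end{equation*}
Letting $\varphi\to\pm\infty$ yields $\mu(H^+)=0$ and $\mu(\Pi)=0$. So $\mu$ vanishes on every open half-space and every hyperplane.

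To conclude that $\mu=0$, fix $\boldsymbol{y}$ and define the bounded linear functional $F(h)=\int_\Omega h(\boldsymbol{y}^T\boldsymbol{x})\,\mathrm{d}\mu(\boldsymbol{x})$ on bounded measurable $h:\mathbb{R}\to\mathbb{R}$. By the previous step, $F$ vanishes on indicators of intervals $[\theta,\infty)$ and $(\theta,\infty)$. By differences it vanishes on indicators of all intervals, hence on simple functions, and by uniform limits on all bounded measurable $h$. Taking $h=\cos$ and $h=\sin$ shows that the Fourier transform $\hat\mu(\boldsymbol{y})=\int e^{i\boldsymbol{y}^T\boldsymbol{x}}\,\mathrm{d}\mu$ vanishes for every $\boldsymbol{y}$. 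A finite measure is determined by its Fourier transform, so $\mu=0$, which contradicts $L\neq0$. Therefore $\bar S=\mathcal{C}(\Omega)$, and for any $f$ and $\epsilon$ there is a finite sum $u_{\mathcal{C}}$ with $\sup_{\Omega}|u_{\mathcal{C}}-f|<\epsilon$, which is the statement of the theorem.

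The delicate points are the uniform bound needed for dominated convergence, which is why continuity plus the sigmoidal limits matter, and the final step of passing from half-spaces to $\mu=0$. For that step I would rely on Fourier uniqueness, as above, rather than an explicit approximation argument.
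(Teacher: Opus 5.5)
Your proposal is correct and is the argument the paper relies on: the paper's proof is just a citation of Cybenko's Theorem 2, and you have reproduced his Hahn--Banach/Riesz reduction together with the lemma that bounded sigmoidal functions are discriminatory. One small tightening is needed: uniform limits of interval step functions do not give all bounded measurable $h$, but you only need $\cos$ and $\sin$ on the bounded set $\{\boldsymbol{y}^T\boldsymbol{x}:\boldsymbol{x}\in\Omega\}$, where they are such uniform limits (alternatively, the pushforward of $\mu$ under $\boldsymbol{x}\mapsto\boldsymbol{y}^T\boldsymbol{x}$ vanishes on all intervals and hence on all Borel sets).
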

\begin{proof}
    It is a direct consequence in reference \cite[Theorem 2]{cybenko1989approximation}.
\end{proof}

\begin{theorem}
    \label{thm:u_C_tanh}
    When using $\tanh$ as the activation function in \eqref{eq:u_C}, the finite sums of \eqref{eq:u_C} are dense in $\mathcal{C}(\Omega)$.
\end{theorem}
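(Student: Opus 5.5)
The plan is to reduce Theorem~\ref{thm:u_C_tanh} to Theorem~\ref{thm:u_C} through the identity
\begin{equation*}
\tanh(z) = 2\,s(2z) - 1, \qquad s(z) = \frac{1}{1+e^{-z}},
\end{equation*}
where $s$ is the logistic function. The function $s$ is continuous and sigmoidal: it tends to $1$ as $z \to +\infty$ and to $0$ as $z \to -\infty$. So Theorem~\ref{thm:u_C} applies to $s$.

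\textbf{Step 1: approximate with logistic units.} Fix $f \in \mathcal{C}(\Omega)$ and $\epsilon > 0$. By Theorem~\ref{thm:u_C} with $\sigma = s$, there are $M$, weights $w_i$, rows $\boldsymbol{W}^i$ and biases $\boldsymbol{b}^i$ such that
\begin{equation*}
\Big| \sum_{i=1}^{M} w_i\, s(\boldsymbol{W}^i \boldsymbol{x} + \boldsymbol{b}^i) - f(\boldsymbol{x}) \Big| < \epsilon \quad \text{for all } \boldsymbol{x} \in \Omega .
\end{equation*}

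\textbf{Step 2: rewrite each unit with $\tanh$.} Applying the identity with $z = \frac{1}{2}(\boldsymbol{W}^i \boldsymbol{x} + \boldsymbol{b}^i)$ gives
\begin{equation*}
s(\boldsymbol{W}^i \boldsymbol{x} + \boldsymbol{b}^i) = \tfrac{1}{2}\tanh\!\big(\tfrac{1}{2}\boldsymbol{W}^i \boldsymbol{x} + \tfrac{1}{2}\boldsymbol{b}^i\big) + \tfrac{1}{2}.
\end{equation*}
Substituting, the logistic sum becomes a $\tanh$ sum with weights $w_i/2$ and scaled parameters $\tfrac{1}{2}\boldsymbol{W}^i$, $\tfrac{1}{2}\boldsymbol{b}^i$, plus the constant $c = \tfrac{1}{2}\sum_{i=1}^{M} w_i$.

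\textbf{Step 3: absorb the constant.} This is the only real obstacle, because \eqref{eq:u_C} has no separate output bias. Since $\tanh$ is odd, $\tanh(z) - \tanh(-z) = 2\tanh(z)$, so a difference of two units cannot give a constant. Instead I would use one extra unit with zero weight row and bias $\boldsymbol{b} = \tanh^{-1}$-free choice $1$: since $\tanh(1) \neq 0$, the term $\dfrac{c}{\tanh(1)}\,\tanh(\boldsymbol{0}\cdot\boldsymbol{x} + 1)$ equals $c$ identically.

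The resulting expression has the form \eqref{eq:u_C} with $M+1$ $\tanh$ units and equals the logistic sum exactly. It is therefore within $\epsilon$ of $f$ uniformly on $\Omega$, which proves density.

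As a cross-check, one could instead apply Cybenko's argument directly to $\tanh$. This requires $\tanh$ to be discriminatory, and the same affine identity shows this. Equivalently, $\tfrac{1}{2}(\tanh + 1)$ is itself a continuous sigmoidal function, and the same constant-absorption step applies.
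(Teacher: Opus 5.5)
Your proof is correct and follows the paper's route: you rewrite via $\tanh(z)=2s(2z)-1$, invoke Theorem~\ref{thm:u_C} for the logistic function, and absorb the leftover constant $\tfrac12\sum_i w_i$ with one extra $\tanh$ unit. The only difference is that you represent this constant exactly with $\tfrac{c}{\tanh(1)}\tanh(\boldsymbol{0}\cdot\boldsymbol{x}+1)$, whereas the paper approximates it by a saturated unit with $\lvert\tanh(\tilde{\boldsymbol{W}}^r\boldsymbol{x}+\tilde{\boldsymbol{b}}^r)-1\rvert$ small on the compact set $\Omega$, which forces a case split on $\hat{w}=0$ and an $\epsilon/2$ error budget that your version avoids.
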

\begin{proof}
    Consider a continuous sigmoid function $\tilde{\sigma}(x)=\frac{1}{1+e^{-x}}$, then $\tanh(x)=2\tilde{\sigma}(2x)-1$. From Theorem \ref{thm:u_C}, for any given $f\in \mathcal{C}(\Omega)$ and $\epsilon >0$, there is a sum $\tilde{u}_{\mathcal{C}} = \sum_{i=1}^{\tilde{M}} \tilde{w}_i \tilde{\sigma}(\tilde{\boldsymbol{W}}^i \boldsymbol{x} + \tilde{\boldsymbol{b}}^i)$, such that
    \begin{equation*}
        \lvert \tilde{u}_{\mathcal{C}}(\boldsymbol{x}) - f(\boldsymbol{x}) \rvert < \frac{\epsilon}{2}.
    \end{equation*}
We denote $\hat{w}=\sum_{i=1}^{\tilde{M}}\tilde{w}_i$. If $\hat{w}=0$, 
the finite sum $u_{\mathcal{C}}(\boldsymbol{x})=\sum_{i=1}^{\tilde{M}} \frac{1}{2} \tilde{w}_i \tanh (\frac{1}{2} (\tilde{\boldsymbol{W}}^i \boldsymbol{x} + \tilde{\boldsymbol{b}}^i))$ satisfies
    \begin{equation*}
        \lvert u_{\mathcal{C}}(\boldsymbol{x}) - f(\boldsymbol{x}) \rvert = \lvert \sum_{i=1}^{\tilde{M}} \frac{1}{2} \tilde{w}_i(2\tilde{\sigma}(\tilde{\boldsymbol{W}}^i \boldsymbol{x} + \tilde{\boldsymbol{b}}^i) - 1) - f(\boldsymbol{x}) \rvert = \lvert \tilde{u}_{\mathcal{C}}(\boldsymbol{x}) - f(\boldsymbol{x}) \rvert < \frac{\epsilon}{2} < \epsilon.
    \end{equation*} 
If $\hat{w} \neq 0$, since $\Omega \subset \mathbb{R}^d$ is compact and thus is bounded and closed, there exists $\tilde{\boldsymbol{W}}^{r} \in \mathbb{R}^{1 \times d}$ and $\tilde{\boldsymbol{b}}^{r} \in \mathbb{R}$ such that
    \begin{equation*}
        \lvert \tanh(\tilde{\boldsymbol{W}}^{r} \boldsymbol{x} + \tilde{\boldsymbol{b}}^{r}) - 1 \rvert < \frac{\epsilon}{\lvert \hat{w} \rvert}
    \end{equation*}
    holds for all $\boldsymbol{x} \in \Omega$. Therefore, the finite sum $u_{\mathcal{C}}(\boldsymbol{x})=\sum_{i=1}^{\tilde{M}} \frac{1}{2} \tilde{w}_i \tanh (\frac{1}{2} (\tilde{\boldsymbol{W}}^i \boldsymbol{x} + \tilde{\boldsymbol{b}}^i)) + \frac{1}{2}\hat{w}\tanh(\tilde{\boldsymbol{W}}^{r} \boldsymbol{x} + \tilde{\boldsymbol{b}}^{r})$ satisfies
    \begin{equation*}
        \begin{aligned}
        \lvert u_{\mathcal{C}}(\boldsymbol{x}) - f(\boldsymbol{x}) \rvert &= \lvert \sum_{i=1}^{\tilde{M}} \frac{1}{2} \tilde{w}_i(2\tilde{\sigma}(\tilde{\boldsymbol{W}}^i \boldsymbol{x} + \tilde{\boldsymbol{b}}^i) - 1) + \frac{1}{2}\hat{w}\tanh(\tilde{\boldsymbol{W}}^{r} \boldsymbol{x} + \tilde{\boldsymbol{b}}^{r}) - f(\boldsymbol{x}) \rvert \\
        & \leq \lvert \tilde{u}_{\mathcal{C}}(\boldsymbol{x}) - f(\boldsymbol{x}) \rvert +  \frac{1}{2}\lvert \hat{w} \rvert \lvert \tanh(\tilde{\boldsymbol{W}}^{r} \boldsymbol{x} + \tilde{\boldsymbol{b}}^{r}) - 1 \rvert \\
        & < \frac{\epsilon}{2} + \frac{\epsilon}{2} = \epsilon            
        \end{aligned}
    \end{equation*} 
    for all $\boldsymbol{x} \in \Omega$.
\end{proof}

For TP-ELM, we expand the definition of discriminatory functions in \cite{cybenko1989approximation} as follows

\begin{definition}
    $\sigma$ is said to be discriminatory if for a measure $\mu \in \mathcal{M}(\Omega)$
    \begin{equation}
        \int_{\Omega} \sigma(\boldsymbol{W}_1^r \boldsymbol{x} + \boldsymbol{b}_1^r) \sigma(\boldsymbol{W}_2^r \boldsymbol{x} + \boldsymbol{b}_2^r) \d \mu(\boldsymbol{x}) = 0,
    \end{equation}
    for all $\boldsymbol{W}_k^r \in \mathbb{R}^{1 \times d}$ and $\boldsymbol{b}_k^r \in \mathbb{R}$, where $k=1,2$, implies $\mu = 0$.
\end{definition}
As in \cite{cybenko1989approximation, chenel2023}, we have the following lemma and Theorem.
\begin{lemma} \cite{cybenko1989approximation, chenel2023} Any bounded, measurable sigmoidal function $\sigma$ is discriminatory. In particular, any continuous sigmoidal function is discriminatory.
\end{lemma}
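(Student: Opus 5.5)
The claim is that any bounded measurable sigmoidal $\sigma$ is discriminatory in the extended, product sense of the definition above. The plan is to reduce it to Cybenko's classical one-ridge statement by choosing one factor of the product to be nearly constant. Recall that Cybenko's lemma \cite[Lemma 1]{cybenko1989approximation} says the following. For $\mu\in\mathcal{M}(\Omega)$ a finite signed regular Borel measure on compact $\Omega$, if $\int_\Omega \sigma(\boldsymbol{W}\boldsymbol{x}+b)\d\mu(\boldsymbol{x})=0$ for all $\boldsymbol{W}\in\mathbb{R}^{1\times d}$ and $b\in\mathbb{R}$, then $\mu=0$. So it suffices to show that the product hypothesis implies this single-ridge hypothesis.

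\textbf{Step 1.} Fix arbitrary $\boldsymbol{W}_1^r,\boldsymbol{b}_1^r$ and choose the second factor with $\boldsymbol{W}_2^r=0$ and $\boldsymbol{b}_2^r=\lambda$. The hypothesis gives
\begin{equation*}
\sigma(\lambda)\int_\Omega \sigma(\boldsymbol{W}_1^r\boldsymbol{x}+\boldsymbol{b}_1^r)\d\mu(\boldsymbol{x})=0 \quad \text{for every } \lambda\in\mathbb{R}.
\end{equation*}
Since $\sigma$ is sigmoidal, $\sigma(\lambda)\to1$ as $\lambda\to+\infty$. Picking $\lambda$ large enough that $\sigma(\lambda)\neq0$ and dividing, we obtain $\int_\Omega\sigma(\boldsymbol{W}_1^r\boldsymbol{x}+\boldsymbol{b}_1^r)\d\mu=0$ for all $\boldsymbol{W}_1^r,\boldsymbol{b}_1^r$. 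No limit theorem is needed here, because the second factor is exactly constant.

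\textbf{Step 2.} Apply Cybenko's argument to conclude $\mu=0$. Should one wish to be self-contained, the steps are as follows.
\begin{itemize}
\item Set $\sigma_\lambda(\boldsymbol{x})=\sigma(\lambda(\boldsymbol{W}\boldsymbol{x}+b)+\varphi)$. As $\lambda\to\infty$, $\sigma_\lambda$ converges pointwise and boundedly to $\gamma$, where $\gamma=1$ on the open half-space $\{\boldsymbol{W}\boldsymbol{x}+b>0\}$, $\gamma=0$ on $\{\boldsymbol{W}\boldsymbol{x}+b<0\}$, and $\gamma=\sigma(\varphi)$ on the hyperplane.
\item By dominated convergence, valid because $\sigma$ is bounded and $|\mu|$ is finite, $\mu$ of every open half-space plus $\sigma(\varphi)$ times $\mu$ of every hyperplane is zero. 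Letting $\varphi\to\pm\infty$ shows that $\mu$ annihilates every half-space and every hyperplane.
\item Consequently, for fixed $\boldsymbol{W}$, the functional $h\mapsto\int h(\boldsymbol{W}\boldsymbol{x})\d\mu$ vanishes on indicators of intervals, hence on simple functions, and hence on bounded measurable $h$ by uniform limits. In particular it vanishes on $\cos$ and $\sin$.
\item This gives $\hat\mu(\boldsymbol{W})=0$ for all $\boldsymbol{W}$, so $\mu=0$.
\end{itemize}

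\textbf{Step 3.} Any continuous sigmoidal function is bounded and measurable, so the second sentence of the lemma follows from the first.

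The only genuinely delicate point is Step 2, specifically the bounded convergence and half-space limit argument. It is taken directly from \cite{cybenko1989approximation}, so I would cite it rather than redo it.
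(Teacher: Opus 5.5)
Your proposal is correct. The paper proves nothing here itself: it states the lemma ``as in'' \cite{cybenko1989approximation, chenel2023}, which tacitly suggests re-running Cybenko's half-space limit argument for the product kernel $\sigma(\boldsymbol{W}_1^r\boldsymbol{x}+\boldsymbol{b}_1^r)\,\sigma(\boldsymbol{W}_2^r\boldsymbol{x}+\boldsymbol{b}_2^r)$. Your Step 1 makes that unnecessary. The definition quantifies over all $\boldsymbol{W}_2^r\in\mathbb{R}^{1\times d}$, including $\boldsymbol{W}_2^r=0$, so the product hypothesis already contains Cybenko's single-ridge hypothesis times the nonzero constant $\sigma(\lambda)$, and his Lemma 1 applies verbatim.

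This gives a complete, very short proof. It also shows that the extended notion is implied by Cybenko's original one, since a discriminatory $\sigma$ cannot vanish identically. The same degenerate-factor choice shows that the sums \eqref{eq:TP_ELM} contain all sums \eqref{eq:u_C}, so Theorem~\ref{thm:TP_ELM} is an immediate consequence of Theorem~\ref{thm:u_C}.

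What your route does not use is any genuine product structure, and it relies on degenerate weights being admissible. If they were excluded, you could fix any $\boldsymbol{W}_2^r$ and let $\boldsymbol{b}_2^r\to+\infty$. This drives the second factor to $1$ uniformly on the compact set $\Omega$, and bounded convergence yields the same reduction.

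One small point concerns your optional self-contained sketch of Step 2. Finite linear combinations of interval indicators are not uniformly dense in the bounded measurable functions, so ``hence on bounded measurable $h$ by uniform limits'' is not justified as written; Cybenko's original write-up takes the same shortcut. It does not matter, because you only need $\cos$ and $\sin$. These are uniform limits of step functions on any bounded interval containing $\{\boldsymbol{W}\boldsymbol{x}:\boldsymbol{x}\in\Omega\}$. Alternatively, the pushforward of $\mu$ under $\boldsymbol{x}\mapsto\boldsymbol{W}\boldsymbol{x}$ is a finite signed measure vanishing on all intervals, and hence, by Dynkin's $\pi$--$\lambda$ theorem, it is the zero measure.
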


\begin{theorem} \cite{cybenko1989approximation, chenel2023}
    \label{thm:TP_ELM}
    Let $\sigma$ be any continuous sigmoidal function. The finite sums in Equation \eqref{eq:TP_ELM} are dense in $\mathcal{C}(\Omega)$.
\end{theorem}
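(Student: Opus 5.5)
The plan is to reduce Theorem \ref{thm:TP_ELM} to Theorem \ref{thm:u_C} by showing that the TP-ELM sums already contain, up to an arbitrarily small uniform error, every single-layer sum of the form \eqref{eq:u_C}. Let $\mathcal{S}_{TP}$ be the linear span of all products $\sigma(\boldsymbol{W}_1\boldsymbol{x}+b_1)\sigma(\boldsymbol{W}_2\boldsymbol{x}+b_2)$. The finite sums \eqref{eq:TP_ELM} with arbitrary $p$ and $w_{ij}$ exhaust $\mathcal{S}_{TP}$: given finitely many such products, list all their first factors as $\Phi_1$ and all their second factors as $\Phi_2$, and switch on only the needed $w_{ij}$. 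Since $\mathcal{S}_{TP}$ is a linear subspace, it suffices to show its closure in $\mathcal{C}(\Omega)$ (sup norm, $\Omega$ compact) contains every function $\sigma(\boldsymbol{W}\boldsymbol{x}+b)$. The density of $\mathcal{S}_{TP}$ then follows from Theorem \ref{thm:u_C}.

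The key step is to make one factor nearly constant. Since $\sigma$ is sigmoidal, $\sigma(t)\to 1$ as $t\to+\infty$. Take $\boldsymbol{W}_2=0$ and $b_2=\lambda$. Then $\sigma(\boldsymbol{W}_2\boldsymbol{x}+b_2)=\sigma(\lambda)$ is a constant, and it tends to $1$ as $\lambda\to\infty$. Consequently
\begin{equation*}
\sup_{\boldsymbol{x}\in\Omega}\bigl\lvert \sigma(\boldsymbol{W}\boldsymbol{x}+b)\sigma(\lambda)-\sigma(\boldsymbol{W}\boldsymbol{x}+b)\bigr\rvert \le \lVert\sigma(\boldsymbol{W}\cdot+b)\rVert_{\infty,\Omega}\,\lvert\sigma(\lambda)-1\rvert \to 0 .
\end{equation*}
The sup norm here is finite because $\sigma$ is continuous and $\Omega$ is compact. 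In fact, whenever $\sigma(\lambda)\neq 0$ we can avoid the limit altogether, since $\sigma(\boldsymbol{W}\boldsymbol{x}+b)=\sigma(\lambda)^{-1}\,\sigma(\boldsymbol{W}\boldsymbol{x}+b)\,\sigma(0\cdot\boldsymbol{x}+\lambda)$ exactly, and such a $\lambda$ exists because $\sigma(\lambda)\to1$. So every single-neuron function lies in $\mathcal{S}_{TP}$ itself, not merely in its closure.

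To conclude, fix $f\in\mathcal{C}(\Omega)$ and $\epsilon>0$. Theorem \ref{thm:u_C} gives a sum $u_{\mathcal{C}}=\sum_{i=1}^M w_i\sigma(\boldsymbol{W}^i\boldsymbol{x}+\boldsymbol{b}^i)$ with $\lvert u_{\mathcal{C}}-f\rvert<\epsilon$ on $\Omega$. Take $p=M$, $\Phi_1=\{\sigma(\boldsymbol{W}^i\boldsymbol{x}+\boldsymbol{b}^i)\}_{i=1}^M$, and let every row of the second subnetwork have zero weight and bias $\lambda$. Setting $w_{i1}=w_i/\sigma(\lambda)$ and all other $w_{ij}=0$ reproduces $u_{\mathcal{C}}$ exactly in the form \eqref{eq:TP_ELM}.

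The only delicate point is whether the statement permits degenerate parameters, namely zero rows $\boldsymbol{W}_2^j$ and repeated neurons. If the parameters are required to be nonzero or generic, I would instead choose $\boldsymbol{W}_2=\delta\boldsymbol{v}$ with $\lvert\delta\rvert$ small and $b_2=\lambda$ large. On the compact set $\Omega$ the argument $\delta\boldsymbol{v}\cdot\boldsymbol{x}+\lambda$ then stays uniformly large, so the second factor is uniformly close to $1$ and the limiting argument above applies. As a fallback, one could mimic Cybenko's duality argument directly using the extended discriminatory Definition and the Lemma: a nonzero functional annihilating $\mathcal{S}_{TP}$ would yield a measure $\mu$ with $\int\sigma\sigma\,\d\mu=0$ for all parameters, forcing $\mu=0$. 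However, that route requires actually proving the product form of the discriminatory property, which again reduces to the same constant-factor trick. I therefore consider the direct reduction the cleanest argument.
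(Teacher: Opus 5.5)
Your proposal is correct, but it follows a different route from the one the paper indicates. The paper writes out no proof of Theorem \ref{thm:TP_ELM}. It cites Cybenko and, just before the statement, introduces a product-form Definition of ``discriminatory'' together with a Lemma asserting that sigmoidal functions have this property. That points to Cybenko's duality argument: if the closure of the span of the products were a proper subspace of $\mathcal{C}(\Omega)$, Hahn--Banach and the Riesz representation theorem would produce a nonzero signed measure $\mu$ with $\int_{\Omega}\sigma(\boldsymbol{W}_1^r\boldsymbol{x}+\boldsymbol{b}_1^r)\sigma(\boldsymbol{W}_2^r\boldsymbol{x}+\boldsymbol{b}_2^r)\d\mu(\boldsymbol{x})=0$ for all parameters, contradicting the Lemma.

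You bypass the functional analysis entirely. The identity $\sigma(\boldsymbol{W}\boldsymbol{x}+b)=\sigma(\lambda)^{-1}\sigma(\boldsymbol{W}\boldsymbol{x}+b)\sigma(0\cdot\boldsymbol{x}+\lambda)$ embeds every single-layer sum exactly into the form \eqref{eq:TP_ELM}, and density is then inherited from Theorem \ref{thm:u_C}. This buys several things:
\begin{itemize}
\item Your argument is self-contained given Theorem \ref{thm:u_C}.
\item As you note, it also supplies the justification the paper leaves to citation. The product-form Lemma is proved by exactly your trick inside the integral: take $\boldsymbol{W}_2^r=0$ and $\boldsymbol{b}_2^r=\lambda$ with $\sigma(\lambda)\neq0$, then apply Cybenko's one-factor discriminatory property.
\item It transfers any quantitative single-layer approximation bound verbatim, with $p$ equal to the single-layer width.
\item Since $\tanh(1)\neq0$, the same identity combined with Theorem \ref{thm:u_C_tanh} gives the paper's subsequent $\tanh$ theorem in one line, without the $2\epsilon/5$ bookkeeping used there.
\end{itemize}

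The duality route's merit is mainly that it parallels Cybenko's framework and isolates the relevant property of $\sigma$ in a single measure-theoretic condition. Here it offers no extra generality, because establishing that condition reduces to your constant-factor step anyway. Both arguments ultimately rest on freezing one factor to a constant. So neither says anything about an advantage of the full $p\times p$ tensor product over a single hidden layer.
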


\begin{theorem}
    When using $\tanh$ as the activation function in \eqref{eq:TP_ELM}, the finite sums of the form
    \begin{equation}
        \label{eq:TP_ELM_tanh}
        u_M(\boldsymbol{x}) =  \sum_{i=1}^{p} \sum_{j=1}^{p} w_{ij} \tanh(\boldsymbol{W}_1^i \boldsymbol{x} + \boldsymbol{b}_1^i) \tanh(\boldsymbol{W}_2^j \boldsymbol{x} + \boldsymbol{b}_2^j)
    \end{equation}
    are dense in $\mathcal{C}(\Omega)$.
\end{theorem}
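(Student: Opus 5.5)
The plan is to reduce to the single-layer $\tanh$ density of Theorem~\ref{thm:u_C_tanh}, in the same way that Theorem~\ref{thm:u_C_tanh} was reduced to Theorem~\ref{thm:u_C}. The key observation is that a product $\tanh(\boldsymbol{W}_1^i \boldsymbol{x} + \boldsymbol{b}_1^i)\tanh(\boldsymbol{W}_2^j \boldsymbol{x} + \boldsymbol{b}_2^j)$ can be made uniformly close on the compact set $\Omega$ to a single factor $\tanh(\boldsymbol{W}^i\boldsymbol{x}+\boldsymbol{b}^i)$. To do this, choose the second factor so that it is essentially the constant $1$ on $\Omega$. This is exactly the device already used in the proof of Theorem~\ref{thm:u_C_tanh}: since $\Omega$ is bounded, we may take $\boldsymbol{W}_2^r = \boldsymbol{0}$ and $\boldsymbol{b}_2^r$ large. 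Then $\lvert \tanh(\boldsymbol{b}_2^r) - 1\rvert < \delta$ for any prescribed $\delta>0$, which makes the second factor uniformly close to $1$ on $\Omega$ (with $\boldsymbol{W}_2^r=\boldsymbol{0}$ it is even a constant). Consequently the TP sums contain, up to arbitrarily small uniform error, every sum of the form \eqref{eq:u_C} with $\tanh$ activation.

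The steps are as follows. First, fix $f\in\mathcal{C}(\Omega)$ and $\epsilon>0$. By Theorem~\ref{thm:u_C_tanh}, obtain
\[
u_{\mathcal{C}}(\boldsymbol{x})=\sum_{i=1}^{M} w_i \tanh(\boldsymbol{W}^i\boldsymbol{x}+\boldsymbol{b}^i)
\quad\text{with}\quad
\lvert u_{\mathcal{C}}-f\rvert<\epsilon/2 \text{ on } \Omega .
\]
Second, set $p=M$, $\boldsymbol{W}_1^i=\boldsymbol{W}^i$ and $\boldsymbol{b}_1^i=\boldsymbol{b}^i$. For the second subnetwork take all rows $\boldsymbol{W}_2^j=\boldsymbol{0}$ and $\boldsymbol{b}_2^j=\beta$, with $\beta$ to be chosen. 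Put $w_{ij}=w_i$ if $j=1$ and $w_{ij}=0$ otherwise. Then
\[
u_M(\boldsymbol{x})=\tanh(\beta)\,u_{\mathcal{C}}(\boldsymbol{x}) .
\]
Third, estimate
\[
\lvert u_M-f\rvert\le \lvert u_{\mathcal{C}}-f\rvert + \lvert 1-\tanh\beta\rvert\,\sum_i \lvert w_i\rvert .
\]
Choosing $\beta$ so large that $\lvert 1-\tanh\beta\rvert\sum_i\lvert w_i\rvert<\epsilon/2$ gives $\lvert u_M-f\rvert<\epsilon$ uniformly on $\Omega$.

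The only point needing care is whether the statement intends the subnetwork parameters to be arbitrary (as in Theorem~\ref{thm:TP_ELM}, where density refers to the union over all $p$, weights and biases) or random and distinct. I would read it in the former sense, consistent with Theorem~\ref{thm:TP_ELM}. If distinct rows are required, I would perturb the $\boldsymbol{b}_2^j$ slightly, or simply use only $j=1$ with the others' coefficients zero. In fact even the approximation step is avoidable: dividing by the nonzero constant $\tanh(\beta)$, that is, setting $w_{i1}=w_i/\tanh\beta$, gives $u_M=u_{\mathcal{C}}$ exactly. I would present this exact version as the main argument and mention the limit version only as a remark.

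An alternative route would be to mimic Cybenko's argument via the extended discriminatory definition, invoking the Lemma and Theorem~\ref{thm:TP_ELM}. However, $\tanh$ is not sigmoidal in the $[0,1]$ sense, so this would require an additional affine rescaling. Because of that extra work, I prefer the direct reduction above.
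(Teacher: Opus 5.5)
Your argument is correct, but it takes a different route from the paper. The paper does not reduce to Theorem~\ref{thm:u_C_tanh}. Instead it starts from the tensor-product sigmoidal result, Theorem~\ref{thm:TP_ELM}, applied with the logistic function $\tilde\sigma$. It then substitutes $\tilde\sigma(y)=\frac12\bigl(1+\tanh(y/2)\bigr)$ into each product $\tilde\sigma(\cdot)\tilde\sigma(\cdot)$. This produces a $\tanh$--$\tanh$ product plus two single-$\tanh$ terms and a constant. The paper absorbs these lower-order pieces by multiplying them by auxiliary factors $\tilde t_1,\tilde t_2$ that are uniformly close to $1$ on $\Omega$, and this is the source of the $\frac{2\epsilon}{5}$, $\frac{\epsilon}{5}$ error bookkeeping. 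You instead observe that the tensor-product $\tanh$ class contains the single-layer $\tanh$ class exactly. With $\boldsymbol{W}_2^j=\boldsymbol{0}$ and $\boldsymbol{b}_2^j=\beta$, the second factor is the nonzero constant $\tanh\beta$, which can be divided out of the coefficients. Your route is shorter and has no approximation error in the embedding step. It also rests only on Theorem~\ref{thm:u_C_tanh}, and hence on Cybenko's classical theorem. The paper's route needs Theorem~\ref{thm:TP_ELM}, which the paper justifies only by citation through the extended discriminatory definition. What the paper's route buys is that it stays inside the product structure and shows that $\tanh$ products can reproduce arbitrary logistic products. For the density claim itself this adds nothing. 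Your argument makes clear that the statement is inherited directly from the single-network result and does not use the tensor-product structure at all.
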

\begin{proof}
    The proof is similar to that of Theorem \ref{thm:u_C_tanh}. Consider a continuous sigmoidal function $\tilde{\sigma}(x)=\frac{1}{1+e^{-x}}$, then $\tanh(x)=2\tilde{\sigma}(2x)-1$. From Theorem \ref{thm:TP_ELM}, there is a sum $\tilde{u}_{\mathcal{C}} = \sum_{i=1}^{\tilde{p}} \sum_{j=1}^{\tilde{p}} \tilde{w}_{ij} \tilde{\sigma}(\tilde{\boldsymbol{W}}_1^i \boldsymbol{x} + \tilde{\boldsymbol{b}}_1^i) \tilde{\sigma}(\tilde{\boldsymbol{W}}_2^j \boldsymbol{x} + \tilde{\boldsymbol{b}}_2^j)$, such that
    \begin{equation*}
        \lvert \tilde{u}_{\mathcal{C}}(\boldsymbol{x}) - f(\boldsymbol{x}) \rvert < \frac{2 \epsilon}{5}.
    \end{equation*}

    Denote $\overline{w}=\sum_{i=1}^{\tilde{p}}\sum_{j=1}^{\tilde{p}}\lvert \tilde{w}_{ij} \rvert$,  $\tilde{t}_1 = \tanh (\tilde{\boldsymbol{W}}_1^r \boldsymbol{x} + \tilde{\boldsymbol{b}}_1^r)$ and $\tilde{t}_2 = \tanh (\tilde{\boldsymbol{W}}_2^r \boldsymbol{x} + \tilde{\boldsymbol{b}}_2^r)$. Since $\Omega \subset \mathbb{R}^d$ is compact and thus is bounded and closed, there exists $\tilde{\boldsymbol{W}}_1^{r} \in \mathbb{R}^{1 \times d}$, $\tilde{\boldsymbol{b}}_1^{r} \in \mathbb{R}$, $\tilde{\boldsymbol{W}}_2^{r} \in \mathbb{R}^{1 \times d}$ and $\tilde{\boldsymbol{b}}_2^{r} \in \mathbb{R}$ such that $ \lvert \overline{w} \rvert \lvert \tilde{t}_1 - 1 \rvert < \frac{2 \epsilon}{5}$, $ \lvert \overline{w} \rvert \lvert \tilde{t}_2 - 1 \rvert < \frac{2 \epsilon}{5}$ and $ \lvert \overline{w} \rvert \lvert 1 - \tilde{t}_1 - \tilde{t}_2 + \tilde{t}_1 \tilde{t}_2 \rvert = \lvert \overline{w} \rvert \lvert (1 - \tilde{t}_1) + \tilde{t}_2 (\tilde{t}_1 - 1) \rvert < 2 \lvert \overline{w} \rvert \lvert \tilde{t}_1 - 1 \rvert < \frac{4 \epsilon}{5}$.
 
    Then the finite sum 
    \begin{equation*}
        \begin{aligned}
        u_{\tilde{M}}(\boldsymbol{x})&=\sum_{i=1}^{\tilde{p}}\sum_{j=1}^{\tilde{p}} \frac{1}{4} \tilde{w}_{ij} \tanh (\frac{1}{2} (\tilde{\boldsymbol{W}}_1^i \boldsymbol{x} + \tilde{\boldsymbol{b}}_1^i)) \tanh (\frac{1}{2} (\tilde{\boldsymbol{W}}_2^j \boldsymbol{x} + \tilde{\boldsymbol{b}}_2^j)) \\
        & + \sum_{i=1}^{\tilde{p}}\sum_{j=1}^{\tilde{p}} \frac{1}{4} \tilde{w}_{ij} \tilde{t}_1 \tanh (\frac{1}{2} (\tilde{\boldsymbol{W}}_2^j \boldsymbol{x} + \tilde{\boldsymbol{b}}_2^j))\\
        & + \sum_{i=1}^{\tilde{p}}\sum_{j=1}^{\tilde{p}} \frac{1}{4} \tilde{w}_{ij} \tilde{t}_2 \tanh (\frac{1}{2} (\tilde{\boldsymbol{W}}_1^i \boldsymbol{x} + \tilde{\boldsymbol{b}}_1^i)) 
        + \sum_{i=1}^{\tilde{p}}\sum_{j=1}^{\tilde{p}} \frac{1}{4} \tilde{w}_{ij} \tilde{t}_1 \tilde{t}_2
        \end{aligned}
    \end{equation*}
    satisfies
    \begin{equation*}
        \begin{aligned}
        \lvert u_{\tilde{M}}(\boldsymbol{x}) - f(\boldsymbol{x}) \rvert &= \lvert \sum_{i=1}^{\tilde{p}} \sum_{j=1}^{\tilde{p}} \tilde{w}_{ij} \tilde{\sigma}(\tilde{\boldsymbol{W}}_1^i \boldsymbol{x} + \tilde{\boldsymbol{b}}_1^i) \tilde{\sigma}(\tilde{\boldsymbol{W}}_2^j \boldsymbol{x} + \tilde{\boldsymbol{b}}_2^j) - f(\boldsymbol{x})  \\
          & \quad + \sum_{i=1}^{\tilde{p}} \sum_{j=1}^{\tilde{p}} \tilde{w}_{ij} (\frac{1}{4} - \frac{1}{2} \tilde{\sigma}(\tilde{\boldsymbol{W}}_1^i \boldsymbol{x} + \tilde{\boldsymbol{b}}_1^i) - \frac{1}{2} \tilde{\sigma}(\tilde{\boldsymbol{W}}_2^j \boldsymbol{x} + \tilde{\boldsymbol{b}}_2^j)) \\ 
          & \quad+ \sum_{i=1}^{\tilde{p}}\sum_{j=1}^{\tilde{p}} \frac{1}{2} \tilde{w}_{ij} \tilde{t}_1 (\tilde{\sigma} (\tilde{\boldsymbol{W}}_2^j \boldsymbol{x} + \tilde{\boldsymbol{b}}_2^j) - \frac{1}{2}) \\
          & \quad+ \sum_{i=1}^{\tilde{p}}\sum_{j=1}^{\tilde{p}} \frac{1}{2} \tilde{w}_{ij} \tilde{t}_2 (\tilde{\sigma} (\tilde{\boldsymbol{W}}_1^i \boldsymbol{x} + \tilde{\boldsymbol{b}}_1^i) - \frac{1}{2})\\
        & \quad + \sum_{i=1}^{\tilde{p}}\sum_{j=1}^{\tilde{p}} \frac{1}{4} \tilde{w}_{ij} \tilde{t}_1 \tilde{t}_2  \rvert \\
%        \end{aligned}
 %       \end{equation*}
 %       \begin{equation*}
%        \begin{aligned}
        &= \lvert (\tilde{u}_{\mathcal{C}}(\boldsymbol{x}) - f(\boldsymbol{x})) + (\sum_{i=1}^{\tilde{p}}\sum_{j=1}^{\tilde{p}} \frac{1}{2} \tilde{w}_{ij} (\tilde{t}_1 - 1)\tilde{\sigma} (\tilde{\boldsymbol{W}}_2^j \boldsymbol{x} + \tilde{\boldsymbol{b}}_2^j)) \\
        & \quad + (\sum_{i=1}^{\tilde{p}}\sum_{j=1}^{\tilde{p}} \frac{1}{2} \tilde{w}_{ij} (\tilde{t}_2 - 1)\tilde{\sigma} (\tilde{\boldsymbol{W}}_1^i \boldsymbol{x} + \tilde{\boldsymbol{b}}_1^i)) \\
        & \quad + (\sum_{i=1}^{\tilde{p}}\sum_{j=1}^{\tilde{p}} \frac{1}{4} \tilde{w}_{ij}(1 - \tilde{t}_1 - \tilde{t}_2 + \tilde{t}_1 \tilde{t}_2)) \rvert \\
        &< \lvert \tilde{u}_{\mathcal{C}}(\boldsymbol{x}) - f(\boldsymbol{x}) \rvert + \frac{1}{2} \overline{w} \lvert \tilde{t}_1 - 1 \rvert + \frac{1}{2} \overline{w} \lvert \tilde{t}_2 - 1 \rvert \\
        &\quad + \frac{1}{4} \overline{w} \lvert 1 - \tilde{t}_1 - \tilde{t}_2 + \tilde{t}_1 \tilde{t}_2  \rvert \\
        &< \frac{2 \epsilon}{5} + \frac{\epsilon}{5} + \frac{\epsilon}{5} + \frac{\epsilon}{5} = \epsilon,
        \end{aligned}
    \end{equation*}
    for all $\boldsymbol{x} \in \Omega$.
\end{proof}

% 实验 Experiments
\section{Numerical Experiments}
\label{sec:experiments}
\subsection{Experimental Setup}
In this section, we present numerical experiments to evaluate the applicability and accuracy of TPNet. These experiments include function approximation and solving PDEs. For all experiments, we utilize ELM, MLP and ResNet as subnetworks within TPNet, resulting in three variants: TP-ELM, TP-MLP and TP-ResNet. For comparison, we also employ HLConcELM \cite{ni2023numerical} to solve the corresponding problems. All experiments are conducted on a high-performance server running CentOS 7, equipped with an Intel Xeon Platinum 8358 CPU (2.60 GHz) and an NVIDIA A100 GPU with 80GB of memory. Unless otherwise specified, we use the hyperbolic tangent ($\tanh$) activation function for all neural networks.

To assess the impact of the number of basis functions on the approximate solution, we select values for $M$ (number of basis functions) as $100$, $400$, $900$, $\cdots$, $10,000$ and for $p$ (number of outputs in each subnetwork) as $10$, $20$, $30$, $\cdots$, $100$. This systematic evaluation allows us to analyze the effects of $M$ and $p$ on the performance of HLConcELM, TP-ELM, TP-MLP and TP-ResNet. The architecture of HLConcELM can be succinctly described as [$d$, $\frac{M}{2}$, $\frac{M}{2}$]. In contrast, within TPNet, the ELM subnetwork follows the structure [$d$, $p$], for the MLP subnetwork is [$d$, $p$, $p$, $p$, $p$], while both the MLP and ResNet subnetworks follow the structure [$d$, $p$, $p$, $p$, $p$]. Notably, the concatenate operation in HLConcELM and the tensor product operation in TPNet are not explicitly represented in these descriptions. In the subsequent numerical experiments, the datasets used for training the neural networks primarily comprise collocation points within the computational domain and the corresponding values of the PDEs’ prescribed functions at these points.

To rigorously quantify the approximation capabilities of neural networks in our numerical experiments, we introduce two error metrics: the maximum absolute $L_{\infty}$ error and the $L_2$ error. These are defined as follows:
\begin{eqnarray}\label{eq:definitionerror}
L_{\infty} \enspace \mbox{error}  = \max_{1\leq i \leq N} \vert u_M(\boldsymbol{x}_i)-u_{exact}(\boldsymbol{x}_i) \vert,  \\
L_2 \enspace \mbox{error} = \left(\sum_{i=1}^{N}(u_M(\boldsymbol{x}_i)-u_{exact}(\boldsymbol{x}_i))^2\right)^{\frac{1}{2}},
\end{eqnarray}
where $u_M$ and $u_{exact}$ denote the network solution and the exact solution, respectively, and $\boldsymbol{x}_i$ ($1\le i \le N$) denote the evaluation points.

\subsection{Two-Dimensional Function Approximation}
To evaluate the function approximation capability of TPNet, we apply it to the following two-dimensional function defined in $\Omega=[-1, 1]^2$:
\begin{equation}
    \label{eq:2D_function}
    u(x, y) = \sin(\pi x) \sin(4\pi y).
\end{equation}

For this experiment, we use a uniform grid of $N_x \times N_y = 101 \times 101$ collocation points for training, where $N_x$ and $N_y$ denote the number of points along the $x$-axis and $y$-axis, respectively. All neural networks in this study are initialized using the Kaiming initialization method \cite{He_2015_ICCV}. We compare the performance of HLConcELM, TP-ELM, TP-MLP, and TP-ResNet by analyzing their $L_{\infty}$ errors, $L_{2}$ errors, and training time, as shown in Figure \ref{fig:error_2d} and Table \ref{tab:error_2d}. From Figure \ref{fig:error_2d}, we observe that as the number of basis functions increases, the error decreases for all network models. However, when the number of basis functions exceeds $M > 1,600$, we find that TP-ELM exhibits slightly higher errors than HLConcELM, TP-MLP achieves errors comparable to HLConcELM, and TP-ResNet significantly outperforms HLConcELM in terms of accuracy. Additionally, the training time of the four models remains nearly identical, suggesting that TP-ResNet provides the most effective approximation of Equation \eqref{eq:2D_function} without incurring extra computational cost.

Table \ref{tab:error_2d} presents detailed numerical results. As expected, training time increases with the number of basis functions, though all models complete training within $2$ seconds. For small $M$, all networks yield relatively large errors, but when $M \ge 1,600$, they all achieve significantly lower errors. The minimum $L_{\infty}$ and $L_{2}$ errors obtained by each model are $1.72 \times 10^{-8}$ and $4.82 \times 10^{-8}$ (HLConcELM), $1.87 \times 10^{-7}$ and $1.61 \times 10^{-6}$ (TP-ELM), $2.03 \times 10^{-8}$ and $1.02 \times 10^{-7}$ (TP-MLP), and $6.32 \times 10^{-11}$ and $3.44 \times 10^{-10}$ (TP-ResNet), respectively. These results confirm that while all four models perform well, TP-ResNet achieves the highest accuracy, making it the most effective architecture for this function approximation task.

\begin{figure}[htp]
    \centering
    \begin{minipage}{0.9\linewidth}
    \includegraphics[width=0.8\textwidth]{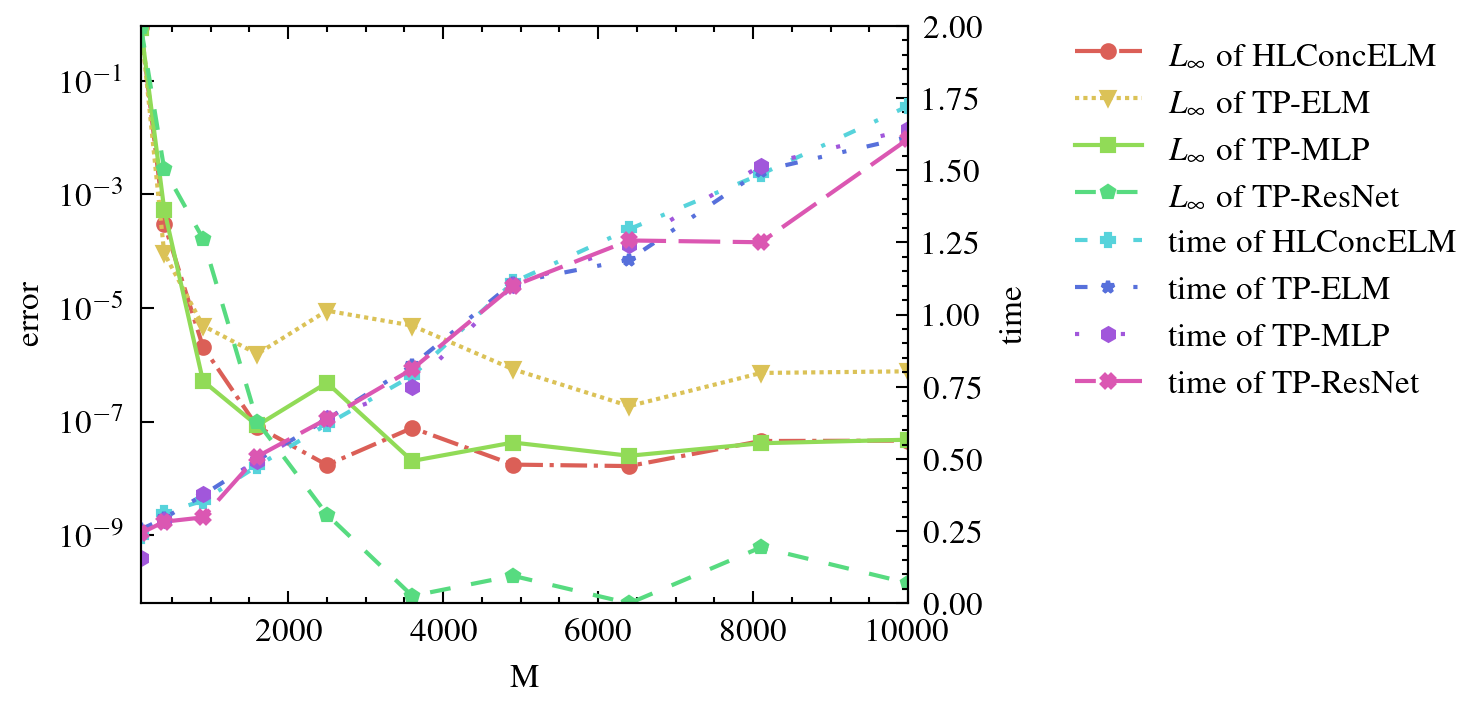}
    \end{minipage}
    \caption{Function approximation: The $L_{\infty}$ errors and the training time of neural networks.}
    \label{fig:error_2d}
\end{figure}

\begin{table}[htp]
    \begin{center}
        \caption{Function approximation: Performance comparison of the TPNet and HLConcELM \cite{ni2023numerical} in approximating the two-dimensional function defined by Equation \eqref{eq:2D_function}.  The $L_{\infty}$ errors, $L_{2}$ errors and training time for each model configuration are presented.}
        \setlength\tabcolsep{1pt}
        \tiny{
        \begin{tabular}{ccccccccccccc}
            \hline\noalign{\smallskip}
            \multirow{2}{*}{$M$} & \multicolumn{3}{c}{HLConcELM \cite{ni2023numerical}} & \multicolumn{3}{c}{TP-ELM} & \multicolumn{3}{c}{TP-MLP} & \multicolumn{2}{c}{TP-ResNet}  \\
            & $L_{\infty}$ & $L_{2}$ & time (s) & $L_{\infty}$ & $L_{2}$ & time (s) & $L_{\infty}$ & $L_{2}$ & time (s) & $L_{\infty}$ & $L_{2}$  & time (s)     \\
            \hline
            100     & 8.77E-01 & 1.94E+01 & 0.2561  & 9.22E-01 & 1.94E+01 & 0.1562 & 8.60E-01 & 2.36E+01 & 0.2331 & 8.77E-01 & 1.59E+01 & 0.2433  \\
            400     & 3.05E-04 & 1.82E-03 & 0.2947  & 9.16E-05 & 9.95E-04 & 0.2883 & 5.37E-04 & 6.39E-03 & 0.3134 & 2.75E-03 & 4.64E-02 & 0.2827  \\
            900     & 2.07E-06 & 8.09E-06 & 0.3751  & 4.86E-06 & 1.48E-05 & 0.3782 & 5.18E-07 & 2.10E-06 & 0.3554 & 1.66E-04 & 1.18E-03 & 0.2977  \\
            1600    & 8.16E-08 & 3.99E-07 & 0.4952  & 1.55E-06 & 2.25E-05 & 0.4921 & 8.52E-08 & 4.53E-07 & 0.4756 & 9.95E-08 & 8.47E-07 & 0.5084  \\
            2500    & 1.72E-08 & 1.07E-07 & 0.6375  & 8.94E-06 & 1.49E-05 & 0.6423 & 4.82E-07 & 1.09E-06 & 0.6198 & 2.31E-09 & 6.24E-09 & 0.6388  \\
            3600    & 7.74E-08 & 1.77E-07 & 0.8241  & 4.88E-06 & 8.86E-06 & 0.7495 & 2.03E-08 & 1.74E-07 & 0.7883 & 8.55E-11 & 6.42E-10 & 0.8113  \\
            4900    & 1.75E-08 & 4.82E-08 & 1.1107  & 8.34E-07 & 2.62E-06 & 1.1015 & 4.28E-08 & 1.35E-07 & 1.1116 & 1.93E-10 & 9.92E-10 & 1.0989  \\
            6400    & 1.65E-08 & 7.00E-08 & 1.1904  & 1.87E-07 & 1.86E-06 & 1.2404 & 2.50E-08 & 1.97E-07 & 1.2959 & 6.32E-11 & 3.44E-10 & 1.2567  \\
            8100    & 4.55E-08 & 8.56E-08 & 1.4978  & 7.17E-07 & 1.82E-06 & 1.5162 & 4.15E-08 & 1.02E-07 & 1.4862 & 6.11E-10 & 2.43E-09 & 1.2505  \\
            10000   & 4.62E-08 & 1.42E-07 & 1.6152  & 7.66E-07 & 1.61E-06 & 1.6415 & 4.80E-08 & 1.16E-07 & 1.7223 & 1.43E-10 & 7.01E-10 & 1.6082  \\
            \hline
        \end{tabular}
        }
        \label{tab:error_2d}
    \end{center}
\end{table}

\subsection{Helmholtz Equation}
We now consider the Helmholtz equation in two dimensions:
\begin{equation}
    \label{eq:Helmholtz_equation}
    \begin{array}{r@{}l}
        \left\{
        \begin{aligned}
            \Delta u + k^2 u & = q, &  & \mbox{in} \enspace \Omega,          \\
            u             & = h,        &  & \mbox{on} \enspace \partial \Omega,
        \end{aligned}
        \right.
    \end{array}
\end{equation}
where $\Omega = (-1, 1)^2$ represents the spatial domain, $\partial \Omega$ denotes its boundary. The exact solution is defined by:
\begin{equation}
    \label{eq:Helmholtz_solution}
    u(x, y) = \sin(a_1 \pi x) \sin(a_2 \pi y),
\end{equation}
with the source term given by:
\begin{equation}
    \label{eq:Helmholtz_source_term}
    q(x, y)=(k^2-(a_1 \pi)^2 - (a_2 \pi)^2) \sin(a_1 \pi x) \sin(a_2 \pi y),
\end{equation}
where the parameters are $a_1 = 1$, $a_2 = 4$, and $k = 1$.

For training, we use $N_x \times N_y = 101 \times 101$ uniform collocation points. The hidden layers in HLConcELM and the subnetworks in TPNet are initialized using the Kaiming initialization method. 

The $L_{\infty}$ error and training time for HLConcELM, TP-ELM, TP-MLP and TP-ResNet are presented in Figure \ref{fig:error_helmholtz}. It is observed that as the number of basis functions increases, all models show decreasing errors, which level off in the later stages.
Additionally, HLConcELM, TP-ELM, and TP-MLP fit the solution of Equation \eqref{eq:Helmholtz_equation} well as the number of basis functions increases. TP-ResNet, while initially exhibiting relatively larger errors, outperforms the other models when $M \ge 3,600$. TP-ELM’s errors are slightly larger than HLConcELM’s, but TP-ResNet achieves the best performance overall. The training time for TP-ELM, TP-MLP, and TP-ResNet is very close and relatively small, while the training time for HLConcELM is significantly larger than the other models.

HLConcELM, TP-ELM and TP-MLP fit the solution of Equation \eqref{eq:Helmholtz_equation} well as the number of basis functions increases, while the errors of TP-ResNet are relatively larger compared to them. However, when the number of basis functions $M \ge 3,600$, TP-ResNet achieves smaller errors than others. Similarly, when there are enough basis functions, we observe that the errors of TP-ELM are relatively larger compared to HLConcELM, the errors of TP-MLP are close to those of HLConcELM, and TP-ResNet performs the best. The training time of TP-ELM, TP-MLP and TP-ResNet is very close and relatively small, while the training time of HLConcELM is significantly larger than others.

Table \ref{tab:error_helmholtz} compares the approximation errors and training time for the different models. It is evident that for smaller $M$, none of the neural networks can approximate the Helmholtz equation well. TP-ResNet achieves the smallest error, although HLConcELM performs better than TP-ELM and TP-MLP in terms of error. However, HLConcELM’s training time is much longer --- especially for large $M$ --- with a time of $3,755.77$ seconds for $M=10,000$, while the other models finish in around $5$ seconds.

These results highlight the trade-off between accuracy and computational efficiency. TP-ResNet provides the most accurate solution, while TP-ELM and TP-MLP offer a good balance between accuracy and speed. HLConcELM, though accurate, is less efficient and may be suitable only when computational resources are less constrained.

\begin{figure}[htbp]
    \begin{minipage}{0.9\linewidth}
        \centering
        \includegraphics[width=0.8\textwidth]{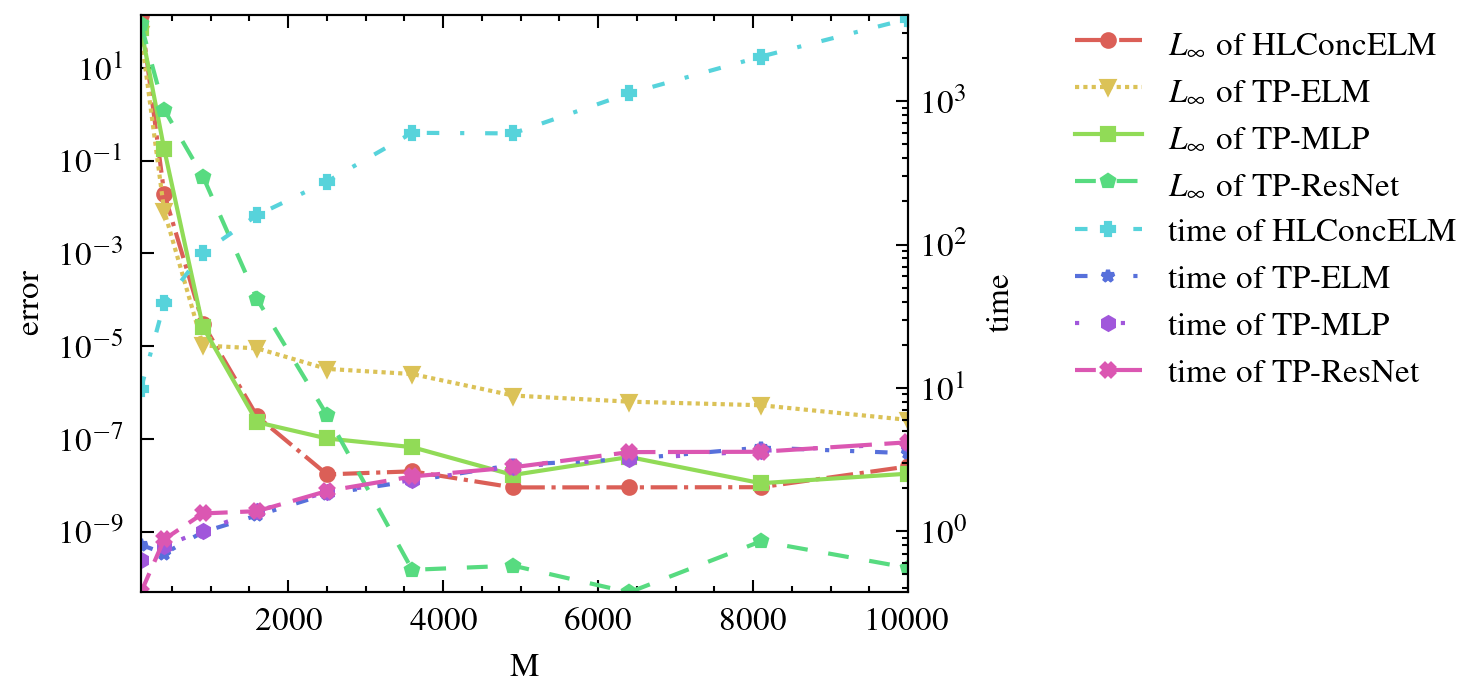}
    \end{minipage}
    \caption{Helmholtz equation: The $L_{\infty}$ errors and training time for the TPNet and HLConcELM \cite{ni2023numerical} when solving the Helmholtz Equation~\eqref{eq:Helmholtz_equation}.}
    \label{fig:error_helmholtz}
\end{figure}

\begin{table}[htp]
    \begin{center}
        \caption{Helmholtz equation: Performance comparison of approximation errors and training time for the TPNet and HLConcELM \cite{ni2023numerical} when solving the Helmholtz Equation~\eqref{eq:Helmholtz_equation}.}
        \setlength\tabcolsep{1pt}
        \tiny{
        \begin{tabular}{ccccccccccccc}
            \hline\noalign{\smallskip}
            \multirow{2}{*}{$M$} & \multicolumn{3}{c}{HLConcELM \cite{ni2023numerical}} & \multicolumn{3}{c}{TP-ELM} & \multicolumn{3}{c}{TP-MLP} & \multicolumn{2}{c}{TP-ResNet}  \\
            & $L_{\infty}$ & $L_{2}$ & time (s) & $L_{\infty}$ & $L_{2}$ & time (s) & $L_{\infty}$ & $L_{2}$ & time (s) & $L_{\infty}$ & $L_{2}$  & time (s)     \\
            \hline
            100    & 1.39E+02 & 1.80E+03  & 9.8950     & 5.34E+01 & 1.00E+03 & 0.8062 & 7.15E+01 & 1.89E+03 & 0.6258 & 8.78E+01 & 2.14E+03 & 0.3752  \\
            400    & 1.93E-02 & 2.29E-01  & 39.0217    & 8.21E-03 & 8.07E-02 & 0.6993 & 1.75E-01 & 1.98E+00 & 0.7860 & 1.26E+00 & 2.16E+01 & 0.8791  \\
            900    & 3.06E-05 & 6.07E-04  & 87.7262    & 1.04E-05 & 5.37E-04 & 0.9889 & 2.59E-05 & 2.29E-04 & 1.0127 & 4.54E-02 & 5.58E-01 & 1.3317  \\
            1600   & 3.05E-07 & 5.66E-06  & 160.4495   & 8.97E-06 & 1.61E-04 & 1.2908 & 2.29E-07 & 5.52E-06 & 1.3924 & 1.02E-04 & 8.06E-04 & 1.3821  \\
            2500   & 1.72E-08 & 3.02E-07  & 274.9721   & 3.24E-06 & 1.30E-04 & 1.8245 & 1.03E-07 & 2.90E-06 & 1.8678 & 3.27E-07 & 1.37E-06 & 1.9094  \\
            3600   & 2.01E-08 & 6.47E-07  & 601.4817   & 2.51E-06 & 7.52E-05 & 2.2495 & 6.66E-08 & 2.68E-06 & 2.2727 & 1.51E-10 & 2.93E-09 & 2.4121  \\
            4900   & 9.02E-09 & 3.24E-07  & 595.8922   & 8.64E-07 & 1.43E-05 & 2.9007 & 1.67E-08 & 3.64E-07 & 2.8221 & 1.86E-10 & 3.00E-09 & 2.7936  \\
            6400   & 9.05E-09 & 2.85E-07  & 1134.7612  & 6.37E-07 & 1.77E-05 & 3.1962 & 4.09E-08 & 1.42E-06 & 3.1929 & 4.93E-11 & 1.48E-09 & 3.5668  \\
            8100   & 9.12E-09 & 2.27E-07  & 2045.1812  & 5.34E-07 & 2.13E-05 & 3.8363 & 1.11E-08 & 5.90E-07 & 3.6376 & 6.22E-10 & 1.50E-08 & 3.5862  \\
            10000  & 2.56E-08 & 6.05E-07  & 3755.7702  & 2.58E-07 & 7.30E-06 & 3.5041 & 1.79E-08 & 4.46E-07 & 4.0776 & 1.66E-10 & 4.89E-09 & 4.1783  \\
            \hline
        \end{tabular}
        }
        \label{tab:error_helmholtz}
    \end{center}
\end{table}

In Table \ref{tab:error_Helmholtz2D_comparison}, we compare the performance of FEM, HLConcELM \cite{ni2023numerical}, RFM \cite{chen2022bridging}, RNN-PG \cite{shang2023randomized}, and TPNet in solving the Helmholtz equation \eqref{eq:Helmholtz_equation}, reporting key metrics including degrees of freedom (DoF), $L_{\infty}$ and $L_2$ errors, training time and number of parameters.

The FEM employs a fine mesh with grid size $h=0.0005$, yielding in a very large system with $741,209$ DoF.  Even with this high resolution, its accuracy is limited to $L_{\infty}$ and $L_2$ errors of $1.38 \times 10^{-6}$ and $9.42 \times 10^{-7}$, respectively. For the neural-network-based methods, the DoF corresponds to the number of basis functions, uniformly set to 10,000 in all comparisons. However, their parameter counts differ substantially due to architectural differences. HLConcELM has the largest parameter count due to its complex hierarchical and concatenated structure. In contrast, RFM and RNN-PG employ simple single-hidden-layer networks and consequently exhibit much lower parameter counts.  Our proposed TP-ELM also utilizes single-hidden-layer subnetworks but further reduces the parameter number by decomposing the problem into two smaller networks. Although TP-MLP and TP-ResNet adopt deeper or more structured architectures, their total parameter counts remain on the order of $10^4$, which is comparable to those of RFM and RNN‑PG.

In terms of training time, methods relying on automatic differentiation typically require hundreds to thousands of seconds. In contrast, TPNet variants complete training within seconds. Although FEM is a direct solver, it incurs high computational costs due to the assembly and solution of large sparse systems over thousands of elements. In terms of accuracy, TP-ResNet achieves the lowest of $L_{\infty}$ error,  with an $L_2$ error of $4.89 \times 10^{-9}$. Overall, Table \ref{tab:error_Helmholtz2D_comparison} demonstrates a compelling trade-off. TPNet achieves a superior or competitive accuracy with significantly lower computational cost and memory usage compared to classical FEM and other neural solvers.

\begin{table}[htp]
    \begin{center}
        \caption{Helmholtz equation: Performance comparison of approximation errors and training time for the TPNet, FEM, HLConcELM \cite{ni2023numerical}, RFM\cite{chen2022bridging} and RNN-PG \cite{shang2023randomized} when solving the Helmholtz Equation~\eqref{eq:Helmholtz_equation}.}
        \setlength\tabcolsep{2pt}
        \small{
        \begin{tabular}{llllll}
            \hline\noalign{\smallskip}
            Method & DoF & $L_{\infty} $ & $L_{2}$ & time (s) & params  \\
            \hline
            FEM & 741209 & 1.38E-06 & 9.42E-07 & 27904.0132 &  7.41E+05 \\
            HLConcELM \cite{ni2023numerical} &  10000  & 2.56E-08 & 6.05E-07 & 3755.7702 & 2.50E+07 \\
            RFM \cite{chen2022bridging} & 10000  & 4.66E-09 & 5.23E-08 & 588.5597 & 3.00E+04 \\
            RNN-PG \cite{shang2023randomized} & 10000  & 3.10E-06 & 2.94E-06 & 1242.2191 & 4.00E+04 \\
            TP-ELM & 10000 & 2.58E-07 & 7.30E-06 & 3.5041    & 6.00E+02 \\
            TP-MLP &  10000 & 1.79E-08 & 4.46E-07 & 4.0776   & 6.12E+04 \\
            TP-ResNet & 10000 & 1.66E-10 & 4.89E-09 & 4.1783 & 6.18E+04 \\
            \hline
        \end{tabular}
        }
        \label{tab:error_Helmholtz2D_comparison}
    \end{center}
\end{table}

\subsection{Heat Equation}
We consider the following heat equation within the spatial-temporal domain $\Omega \times T = (0, 1)^2 \times (0, 1]$:
\begin{equation}
    \label{eq:Heat_equation}
    \begin{array}{r@{}l}
        \left\{
        \begin{aligned}
            u_t(x,y,t) -\Delta u(x,y,t) & = f(x,y,t), &  & (x,y,t) \in \Omega \times (0, 1],          \\
            u(x,y,t)             & = g(x,y,t),        &  & (x,y,t) \in \partial \Omega \times [0, 1], \\
            u(x, y, 0)             & = h(x, y),        &  & (x, y) \in \Omega. \\
        \end{aligned}
        \right.
    \end{array}
\end{equation}
We choose the suitable forcing function $f(x,y,t)$ so that the exact solution is given by $u(x, y, t)=2 e^{-t} \sin(\frac{\pi}{2}x) \sin(\frac{\pi}{2}y)$.

In our approach, the temporal variable is treated equivalently to spatial variables, enabling a unified solution process. During the training stage, we select a uniform grid of $N_x \times N_y \times N_t = 51 \times 51 \times 51$ collocation points. The input layers of TPNet and HLConcELM are designed with three neurons, corresponding to the three-dimensional input space, which includes two spatial dimensions and one temporal dimension. Unlike the previous sections where TPNet and HLConcELM were initialized using the Kaiming method, in this example, we employ the Xavier initialization method due to its greater stability in high-dimensional problems.

Figure \ref{fig:error_heat} presents the $L_{\infty}$ errors and training time for TPNet and HLConcELM in solving the heat equation \eqref{eq:Heat_equation}. The errors for TP-ELM, TP-MLP, and TP-ResNet decrease rapidly before stabilizing, whereas the errors for HLConcELM initially decrease but then increase as the number of basis functions grows. TP-ELM demonstrates the fastest error reduction, followed closely by TP-MLP and HLConcELM, with TP-ResNet exhibiting the slowest reduction. However, when the number of basis functions is sufficiently large, TP-ResNet achieves smaller errors than HLConcELM. Additionally, as the number of basis functions increases, the training time rises for all four networks, with TP-ELM requiring the least time, followed by TP-MLP, TP-ResNet, and HLConcELM. Notably, HLConcELM demands more time to solve the heat equation \eqref{eq:Heat_equation} compared to the other methods.

Table \ref{tab:error_heat} further demonstrates that HLConcELM not only fails to achieve lower errors but also requires significantly more training time compared to the other networks. TP-MLP achieves the best results, with the smallest error. Although TP-ResNet's optimal error is comparable to that of TP-MLP, its corresponding training time is approximately $27$ seconds longer. While TP-ELM exhibits a slightly higher error than TP-MLP, it still attains a very low error with a training time of only $19$ seconds. Overall, TPNet effectively solves the heat equation while maintaining high computational efficiency.

\begin{figure}[htbp]
    \begin{minipage}{0.9\linewidth}
        \centering
        \includegraphics[width=0.8\textwidth]{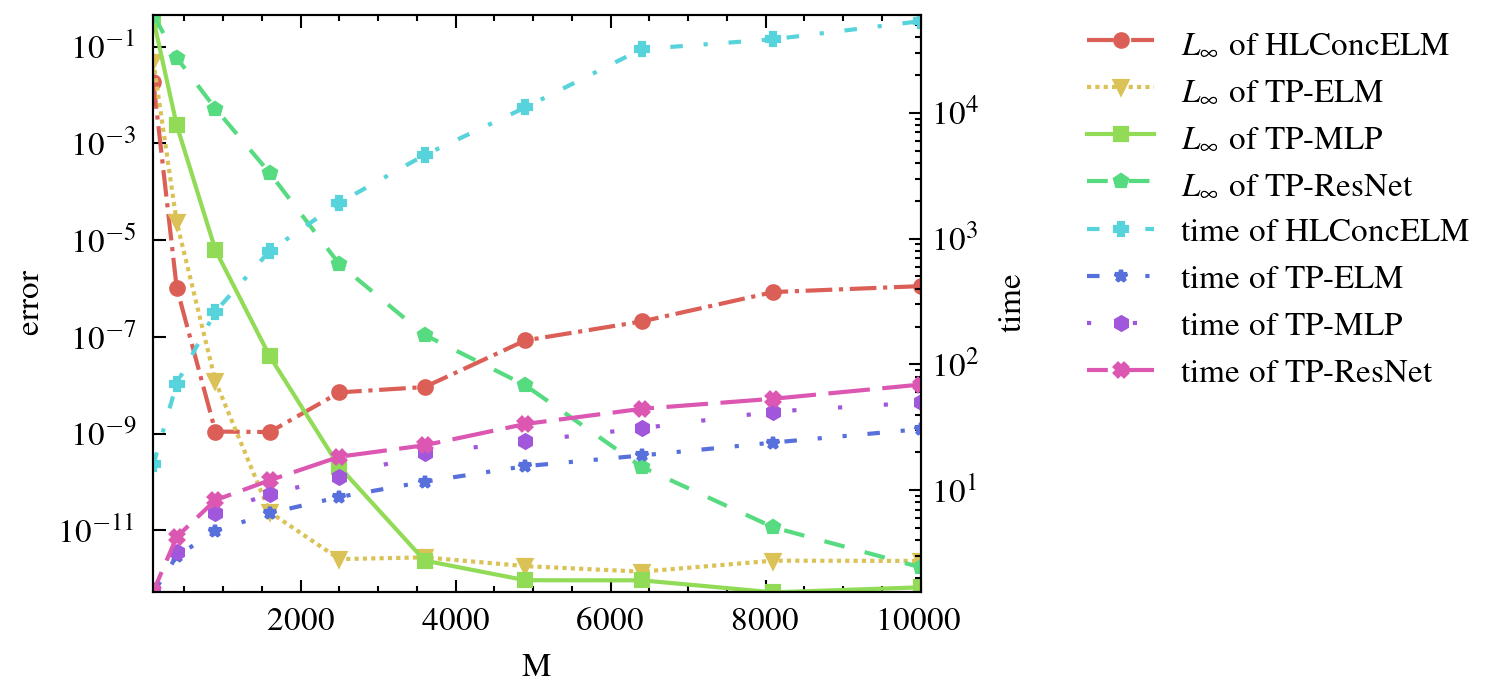}
    \end{minipage}
    \caption{Heat equation: The $L_{\infty}$ errors and training time for the TPNet and HLConcELM when solving the heat equation~\eqref{eq:Heat_equation}.}
    \label{fig:error_heat}
\end{figure}

\begin{table}[htp]
    \begin{center}
        \caption{Heat equation: Performance comparison of the TPNet and HLConcELM \cite{ni2023numerical} in approximating the heat equation~\eqref{eq:Heat_equation}.}
        \setlength\tabcolsep{1pt}
        \tiny{
        \begin{tabular}{ccccccccccccc}
            \hline\noalign{\smallskip}
            \multirow{2}{*}{$M$} & \multicolumn{3}{c}{HLConcELM \cite{ni2023numerical}} & \multicolumn{3}{c}{TP-ELM} & \multicolumn{3}{c}{TP-MLP} & \multicolumn{2}{c}{TP-ResNet}  \\
            & $L_{\infty}$ & $L_{2}$ & time (s) & $L_{\infty}$ & $L_{2}$ & time (s) & $L_{\infty}$ & $L_{2}$ & time (s) & $L_{\infty}$ & $L_{2}$  & time (s)     \\
            \hline
            100    & 1.83E-02 & 2.98E-01 & 16.0370     & 4.86E-02 & 5.43E-01 & 1.5451    & 3.63E-01 & 5.29E+00 & 1.6130  & 4.54E-01 & 7.39E+00 & 1.5765   \\
            400    & 1.05E-06 & 8.69E-06 & 70.2995     & 2.35E-05 & 2.28E-04 & 2.9935    & 2.40E-03 & 3.67E-02 & 3.2216  & 5.89E-02 & 1.16E+00 & 4.2597   \\
            900    & 1.11E-09 & 1.21E-08 & 263.3418    & 1.22E-08 & 1.03E-07 & 4.7488    & 6.25E-06 & 6.11E-05 & 6.6332  & 5.23E-03 & 6.90E-02 & 8.3511   \\
            1600   & 1.08E-09 & 1.13E-08 & 796.6640    & 2.40E-11 & 2.47E-10 & 6.5836    & 4.01E-08 & 2.99E-07 & 9.3437  & 2.45E-04 & 2.14E-03 & 12.0106  \\
            2500   & 7.14E-09 & 9.02E-08 & 1929.5530   & 2.56E-12 & 2.95E-11 & 8.8515    & 2.06E-10 & 1.19E-09 & 12.7047 & 3.18E-06 & 2.07E-05 & 18.4881  \\
            3600   & 9.20E-09 & 1.20E-07 & 4618.2781   & 2.78E-12 & 1.98E-11 & 11.7062   & 2.38E-12 & 2.31E-11 & 19.6833 & 1.10E-07 & 6.97E-07 & 22.7907  \\
            4900   & 8.48E-08 & 9.60E-07 & 11028.6781  & 1.83E-12 & 2.28E-11 & 15.5527   & 9.35E-13 & 1.03E-11 & 24.5404 & 9.97E-09 & 4.63E-08 & 33.6471  \\
            6400   & 2.09E-07 & 2.54E-06 & 32061.6871  & 1.41E-12 & 2.32E-11 & 18.9126   & 9.31E-13 & 1.46E-11 & 31.1021 & 2.06E-10 & 1.08E-09 & 44.3981  \\
            8100   & 8.46E-07 & 1.20E-05 & 38383.0819  & 2.37E-12 & 2.41E-11 & 23.9610   & 5.26E-13 & 7.12E-12 & 42.2084 & 1.16E-11 & 9.15E-11 & 53.3838  \\
            10000  & 1.12E-06 & 1.40E-05 & 53451.3578  & 2.35E-12 & 2.87E-11 & 30.6428   & 6.68E-13 & 1.22E-11 & 49.8507 & 1.73E-12 & 1.08E-11 & 69.2094  \\
            \hline
        \end{tabular}
        }
        \label{tab:error_heat}
    \end{center}
\end{table}

\subsection{Wave Equation}
We consider the following wave equation within $\Omega  \times T = (0, 1)^2 \times  (0, 1]$
%, given by the system of PDEs:
\begin{equation}
    \label{eq:Wave_equation}
    \begin{array}{r@{}l}
        \left\{
        \begin{aligned}
            \frac{\partial ^2 u}{\partial t^2} -\Delta u(x,y,t) & = f(x,y,t), &  & (x,y,t) \in \Omega  \times (0, 1],          \\
            u(x,y,t)             & = g(x,y,t),        &  & (x,y,t) \in \partial \Omega \times [0, 1], \\
            u(x, y, 0)             & = h(x, y),        &  & (x, y) \in \Omega, \\
            \frac{\partial u}{\partial t} (x, y, 0) & = w(x, y), & & (x, y) \in \Omega, 
        \end{aligned}
        \right.
    \end{array}
\end{equation}
where the exact solution is $u(x, y, t)=\sin(\frac{\pi}{2}x) \sin(\frac{\pi}{2}y) \sin(\frac{\pi}{2}t)$. The boundary condition $g(x, y, t)$, the initial conditions $h(x, y, t)$ and $w(x, y)$ and the right-hand source term $f(x, y, t)$ are given accordingly.

We establish a grid of $N_x \times N_y \times N_t = 51 \times 51 \times 51$ uniform collocation points. The TPNet and HLConcELM are initialized using the Xavier initialization method. The $L_{\infty}$ errors and training time for TPNet and HLConcELM are shown in Figure \ref{fig:error_wave}. It is evident that the errors of TP-ELM, TP-MLP, and TP-ResNet decrease with an increasing number of basis functions. However, the errors of HLConcELM decrease initially and increase after $M>1,600$. For $M \le 1,600$, HLConcELM has the fastest error reduction, followed by TP-ELM, TP-MLP, and TP-ResNet. Although TP-ELM shows a rapid decrease in error, TP-MLP achieves lower errors when the number of basis functions is sufficiently large, and TP-ResNet also attains errors close to those of TP-ELM when $M=10,000$. It is evident that HLConcELM does not achieve lower errors compared to other neural networks. Furthermore, the training time of HLConcELM is markedly higher than that of the other neural networks. However, the training time for TP-ELM, TP-MLP, and TP-ResNet is relatively close, with TP-ELM requiring the least training time, followed by TP-MLP and TP-ResNet.

Table \ref{tab:error_wave} further demonstrates that HLConcELM has the longest training time and also achieves the largest error. Although the error of TP-ELM is an order of magnitude higher than that of TP-MLP, it only requires about $6$ seconds to complete training, while TP-MLP and TP-ResNet require $28$ seconds and $37$ seconds, respectively. Additionally, for $M = 10,000$, HLConcELM requires a significant amount of training time ($136,756.8406$ seconds), whereas the other neural networks complete training within $38$ seconds. In summary, TPNet demonstrates superior performance in both accuracy and efficiency when compared to HLConcELM. TP-ELM, in particular, stands out for its rapid training capability, making it a suitable choice for scenarios where computational efficiency is crucial.

\begin{figure}[htbp]
    \begin{minipage}{0.9\linewidth}
        \centering
        \includegraphics[width=0.8\textwidth]{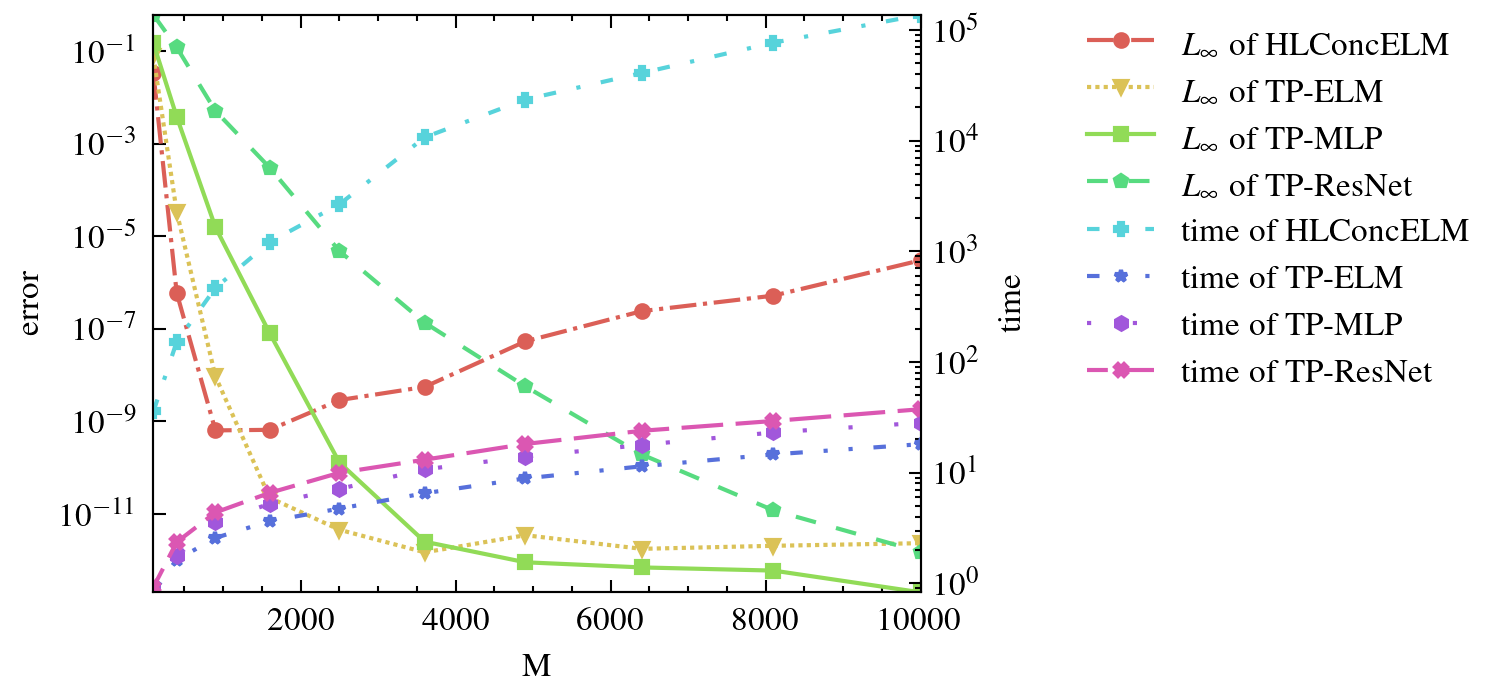}
    \end{minipage}
    \caption{Wave equation: The $L_\infty$ errors and training time  for the TPNet and HLConcELM \cite{ni2023numerical} when solving the wave equation~\eqref{eq:Wave_equation}.}    
    \label{fig:error_wave}
\end{figure}

\begin{table}[htp]
    \begin{center}
        \caption{Wave equation: Performance comparison of the TPNet and HLConcELM \cite{ni2023numerical} in approximating the wave equation~\eqref{eq:Wave_equation}.}
        \setlength\tabcolsep{1pt}
        \tiny{
        \begin{tabular}{ccccccccccccc}
            \hline\noalign{\smallskip}
            \multirow{2}{*}{$M$} & \multicolumn{3}{c}{HLConcELM \cite{ni2023numerical}} & \multicolumn{3}{c}{TP-ELM} & \multicolumn{3}{c}{TP-MLP} & \multicolumn{2}{c}{TP-ResNet}  \\
            & $L_{\infty}$ & $L_{2}$ & time (s) & $L_{\infty}$ & $L_{2}$ & time (s) & $L_{\infty}$ & $L_{2}$ & time (s) & $L_{\infty}$ & $L_{2}$  & time (s)     \\
            \hline
            100   & 3.32E-02 & 2.82E-01 & 35.7711      & 7.62E-02 & 7.96E-01 & 0.8297   & 1.46E-01 & 2.17E+00 & 0.8985  & 6.04E-01 & 1.37E+01 & 0.9239   \\
            400   & 5.99E-07 & 5.20E-06 & 151.5896     & 3.38E-05 & 2.19E-04 & 1.6280   & 3.82E-03 & 5.13E-02 & 1.7670  & 1.20E-01 & 1.55E+00 & 2.3565   \\
            900   & 6.37E-10 & 7.76E-09 & 467.0246     & 9.34E-09 & 5.41E-08 & 2.5748   & 1.56E-05 & 8.86E-05 & 3.5592  & 5.09E-03 & 5.30E-02 & 4.3725   \\
            1600  & 6.55E-10 & 9.05E-09 & 1211.2063    & 2.24E-11 & 1.61E-10 & 3.6484   & 8.03E-08 & 4.88E-07 & 5.2345  & 3.02E-04 & 2.41E-03 & 6.5723   \\
            2500  & 2.88E-09 & 3.83E-08 & 2658.4405    & 4.55E-12 & 3.54E-11 & 4.7364   & 1.29E-10 & 9.46E-10 & 7.1391  & 4.77E-06 & 2.39E-05 & 9.9762   \\
            3600  & 5.55E-09 & 6.44E-08 & 10696.9190   & 1.48E-12 & 1.35E-11 & 6.5149   & 2.51E-12 & 2.35E-11 & 10.7047 & 1.35E-07 & 7.94E-07 & 13.1009  \\
            4900  & 5.34E-08 & 6.65E-07 & 23501.8955   & 3.47E-12 & 1.52E-11 & 8.8781   & 9.01E-13 & 7.88E-12 & 13.9505 & 5.67E-09 & 4.88E-08 & 18.1687  \\
            6400  & 2.40E-07 & 3.42E-06 & 40756.9741   & 1.76E-12 & 1.31E-11 & 11.4151  & 6.98E-13 & 5.52E-12 & 17.8052 & 1.93E-10 & 1.59E-09 & 23.8998  \\
            8100  & 5.19E-07 & 8.38E-06 & 76619.7599   & 2.05E-12 & 2.74E-11 & 14.6303  & 5.96E-13 & 3.61E-12 & 23.0901 & 1.20E-11 & 1.10E-10 & 29.3154  \\
            10000 & 3.01E-06 & 5.48E-05 & 136756.8406  & 2.33E-12 & 2.27E-11 & 18.0904  & 2.02E-13 & 2.07E-12 & 28.2356 & 1.51E-12 & 1.05E-11 & 37.4434  \\
            \hline
        \end{tabular}
        }
        \label{tab:error_wave}
    \end{center}
\end{table}

\subsection{Nonlinear Burger's Equation}
We now consider the nonlinear Burger's equation in the domain $\Omega \times T = (0,1)^2 \times (0,1]$, formulated as follows:
\begin{equation}
    \label{eq:nonlinear_Burgers_equation}
    \begin{array}{r@{}l}
        \left\{
        \begin{aligned}
            u_t + u(u_x + u_y) - \zeta \Delta u & = 0, &  & (x, y, t) \in \Omega  \times (0, 1],          \\
            u(x, y, t)             & = g(x, y, t), & & (x, y, t) \in \partial \Omega \times [0, 1],     \\
            u(x, y, 0)             & = h(x, y),   & & (x, y) \in \Omega. \\
        \end{aligned}
        \right.
    \end{array}
\end{equation}
Here, the exact solution is given by $u(x, y, t) = \frac{1}{1+e^{(x+y-t)/(2 \zeta)}}$, where the constant $\zeta$ is set to $1$.

In the training stage, a uniform grid of $N_x \times N_y \times N_t = 51 \times 51 \times 51$ collocation points is used. Algorithm~\ref{algo:tp_nonlinear_pde} is employed to iteratively solve this nonlinear problem. The initial weight vector $\boldsymbol{w}_0$ is initialized using the Xavier initialization method. The iterative process is controlled by a maximum of nonlinear iterations $k_{\max} = 100$ and a stopping criterion $\epsilon = 10^{-16}$ to ensure convergence.

The results are presented in Figure~\ref{fig:error_burgers} and Table~\ref{tab:error_burgers}. The errors of TP-ELM, TP-MLP, and TP-ResNet decrease with an increasing number of basis functions. However, HLConcELM exhibits an increase in error when the number of basis functions exceeds $M > 1,600$. For $M \leq 1,600$, HLConcELM demonstrates the fastest error reduction rate, followed by TP-ELM, TP-MLP, and TP-ResNet. When a sufficient number of basis functions is used, TP-ELM and TP-MLP achieve lower errors than HLConcELM, while TP-ResNet attains errors close to those of HLConcELM. As shown in Figure~\ref{fig:error_burgers}, the training time of TP-ELM, TP-MLP, and TP-ResNet is nearly identical, whereas HLConcELM requires significantly more computation time.

These results highlight the trade-offs between accuracy, computational efficiency, and network architecture in solving the nonlinear Burgers' equation using TPNet. The $L_{\infty}$ errors, $L_2$ errors, and training time are summarized in Table~\ref{tab:error_burgers}. TPNet achieves higher precision solutions compared to HLConcELM. Notably, the training time for TPNet in this problem is longer than in other cases due to the necessity of using a full matrix in nonlinear iterations. The primary advantage of TPNet lies in its ability to construct a smaller neural network while maintaining a sufficient number of basis functions, thereby improving differentiation efficiency.

\begin{figure}[htbp]
    \begin{minipage}{0.9\linewidth}
        \centering
        \includegraphics[width=0.8\textwidth]{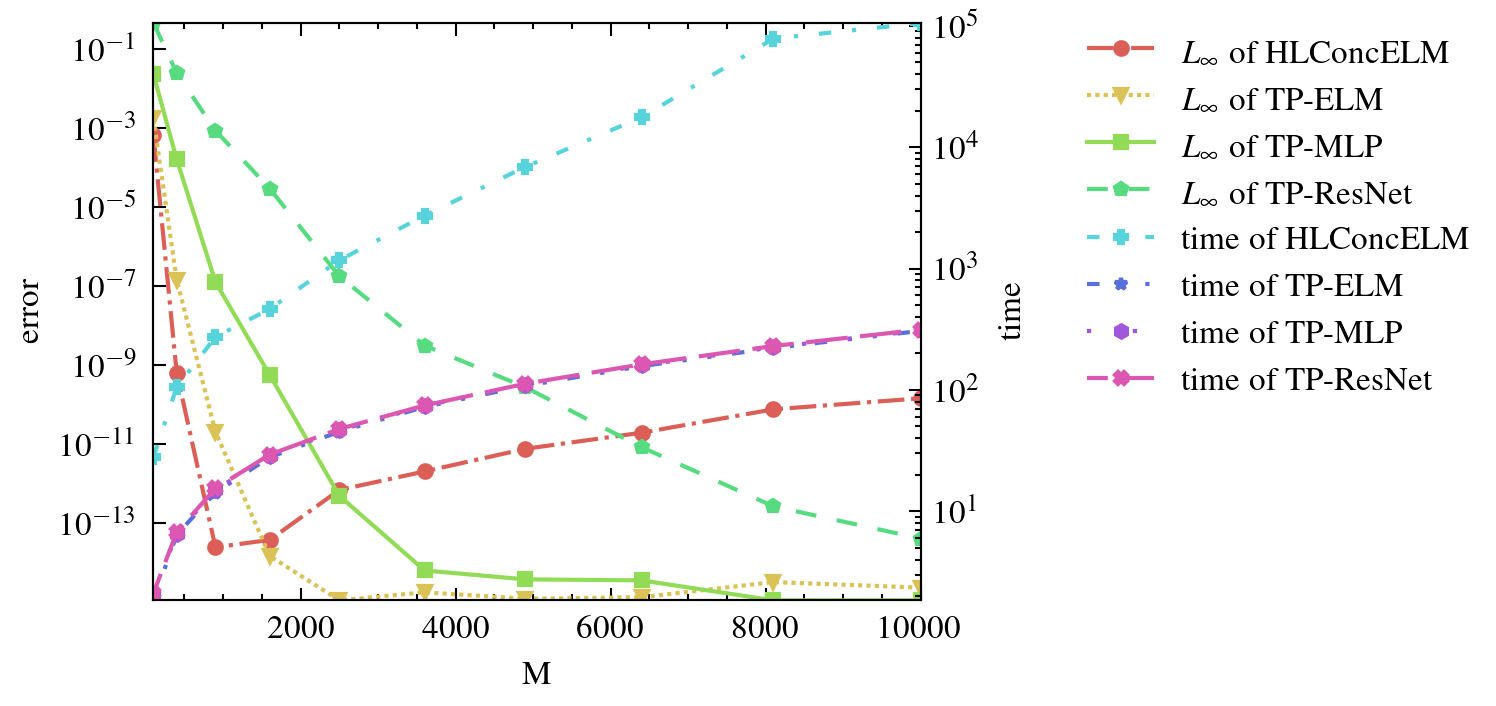}
    \end{minipage}
    \caption{Nonlinear Burger's equation: The $L_{\infty}$ errors and training time for the TPNet and HLConcELM \cite{ni2023numerical} in solving the nonlinear Burger's equation~\eqref{eq:nonlinear_Burgers_equation}.}    
    \label{fig:error_burgers}
\end{figure}

\begin{table}[htp]
    \begin{center}
        \caption{Nonlinear Burger's equation: Performance comparison of TPNet and HLConcELM \cite{ni2023numerical} for solving the nonlinear Burger's equation~\eqref{eq:nonlinear_Burgers_equation} with $\zeta = 1$.}
        \setlength\tabcolsep{1pt}
        \tiny{
        \begin{tabular}{ccccccccccccc}
            \hline\noalign{\smallskip}
            \multirow{2}{*}{$M$} & \multicolumn{3}{c}{HLConcELM \cite{ni2023numerical}} & \multicolumn{3}{c}{TP-ELM} & \multicolumn{3}{c}{TP-MLP} & \multicolumn{2}{c}{TP-ResNet}  \\
            & $L_{\infty}$ & $L_{2}$ & time (s) & $L_{\infty}$ & $L_{2}$ & time (s) & $L_{\infty}$ & $L_{2}$ & time (s) & $L_{\infty}$ & $L_{2}$  & time (s)     \\
            \hline
            100   & 6.66E-04 & 7.41E-03 & 28.0381      & 1.79E-03 & 2.36E-02 & 1.8437   & 2.34E-02 & 5.12E-01 & 2.0504   & 4.70E-01 & 1.74E+01 & 2.1043    \\
            400   & 6.50E-10 & 4.61E-09 & 105.2181     & 1.46E-07 & 1.14E-06 & 6.3930   & 1.66E-04 & 2.10E-03 & 6.5327   & 2.53E-02 & 3.93E-01 & 6.7319    \\
            900   & 2.45E-14 & 2.29E-13 & 273.0279     & 2.08E-11 & 1.76E-10 & 14.2564  & 1.28E-07 & 1.09E-06 & 14.9841  & 8.77E-04 & 1.27E-02 & 15.4513   \\
            1600  & 3.76E-14 & 5.16E-13 & 464.2237     & 1.48E-14 & 1.26E-13 & 27.8418  & 5.69E-10 & 3.78E-09 & 28.6185  & 2.99E-05 & 2.95E-04 & 29.2847   \\
            2500  & 6.95E-13 & 5.06E-12 & 1172.2580    & 1.11E-15 & 1.87E-14 & 45.3162  & 4.90E-13 & 3.47E-12 & 46.6289  & 1.78E-07 & 1.67E-06 & 47.6982   \\
            3600  & 2.04E-12 & 8.31E-12 & 2736.7118    & 1.78E-15 & 3.28E-14 & 72.4357  & 6.33E-15 & 7.24E-14 & 73.9668  & 3.15E-09 & 2.69E-08 & 74.5521   \\
            4900  & 7.65E-12 & 8.43E-11 & 6894.5059    & 1.22E-15 & 2.61E-14 & 108.2349 & 3.77E-15 & 5.15E-14 & 110.7099 & 2.76E-10 & 2.09E-09 & 112.5262  \\
            6400  & 1.96E-11 & 2.17E-10 & 17678.5187   & 1.33E-15 & 1.66E-14 & 157.1707 & 3.55E-15 & 7.94E-14 & 159.4094 & 8.38E-12 & 4.61E-11 & 162.4819  \\
            8100  & 7.64E-11 & 9.98E-10 & 78571.8761   & 3.22E-15 & 4.79E-14 & 221.3515 & 1.11E-15 & 1.74E-14 & 224.6569 & 2.67E-13 & 2.14E-12 & 230.0348  \\
            10000 & 1.45E-10 & 2.22E-09 & 105508.0333  & 2.33E-15 & 4.44E-14 & 306.6965 & 1.11E-15 & 2.45E-14 & 309.4374 & 4.09E-14 & 6.19E-13 & 314.3046  \\
            \hline
        \end{tabular}
        }
        \label{tab:error_burgers}
    \end{center}
\end{table}

\subsection{High-Dimensional Poisson Equation}
In this example, we demonstrate the capability of our proposed methods in handling high-dimensional problems by solving the following Poisson equation:
\begin{equation}
    \label{eq:HDPoisson_equation}
    \begin{array}{r@{}l}
        \left\{
        \begin{aligned}
            -\Delta u & = f(\boldsymbol{x}), &  & \mbox{in} \enspace \Omega,          \\
            u             & = h(\boldsymbol{x}),        &  & \mbox{on} \enspace \partial \Omega,
        \end{aligned}
        \right.
    \end{array}
\end{equation}
where $\Omega = (-1, 1)^d$. The exact solution to this equation is defined as
\begin{equation}
    \label{eq:HDPoisson_solution}
    u(\boldsymbol{x}) = \left( \frac{1}{d}\sum_{i=1}^d x_i \right)^2 + \sin \left( \frac{1}{d}\sum_{i=1}^d x_i \right).
\end{equation}
with the corresponding source term given by $f(\boldsymbol{x})=\frac{1}{d} \left( \sin \left( \frac{1}{d}\sum_{i=1}^d x_i \right) - 2 \right)$ and $h(\boldsymbol{x})=\left( \frac{1}{d}\sum_{i=1}^d x_i \right)^2 + \sin \left( \frac{1}{d}\sum_{i=1}^d x_i \right)$.

We randomly select $10,000$ collocation points in the domain $\Omega$ and $200d$ collocation points on the boundary $\partial \Omega$ using Latin Hypercube Sampling (LHS) \cite{stein1987large} to form the training dataset. This dataset encompasses points within the spatial domain $\Omega$ as well as points lying on its boundary $\partial \Omega$. Furthermore, the four neural networks in this example are initialized using the Xavier initialization method. To save computational time, we train the HLConcELM with $M=2,500$ basis functions and train the TP-ELM, TP-MLP, and TP-ResNet with $M=10,000$.

To verify the effectiveness of our proposed methods for solving high-dimensional problems, TPNet and HLConcELM are used to solve the Poisson equation for $d=5,7,10,15$. The results are shown in Figure \ref{fig:error_PoiHD} and Table \ref{tab:error_PoiHD}. As the dimension $d$ increases, the errors of the approximate solutions for all four neural networks also increase. The errors of TP-MLP and TP-ResNet are both smaller than those of HLConcELM, while the error of TP-ELM is greater than that of HLConcELM. Additionally, TP-ELM requires the least training time, whereas the training time of TP-ResNet is greater than that of TP-MLP. This is because, as the dimension increases, the impact of network complexity on the training time required to solve PDEs becomes more significant. Moreover, the training time of HLConcELM is much longer than that of the other neural networks.

The $L_{\infty}$ errors, $L_2$ errors, and training time of the four neural networks are listed in Table \ref{tab:error_PoiHD}. TP-ELM exhibits significantly larger $L_{\infty}$ and $L_2$ errors compared to HLConcELM. However, TP-MLP and TP-ResNet achieve smaller $L_{\infty}$ and $L_2$ errors than HLConcELM. Regarding training time, TP-ELM completes training in approximately $5$ seconds, whereas TP-MLP and TP-ResNet require around $100$ seconds. In contrast, HLConcELM requires several hundred seconds of training. Overall, the proposed TPNet demonstrates effectiveness in handling high-dimensional Poisson problems.

\begin{figure}[htbp]
    \begin{minipage}{0.9\linewidth}
        \centering
        \includegraphics[width=0.8\textwidth]{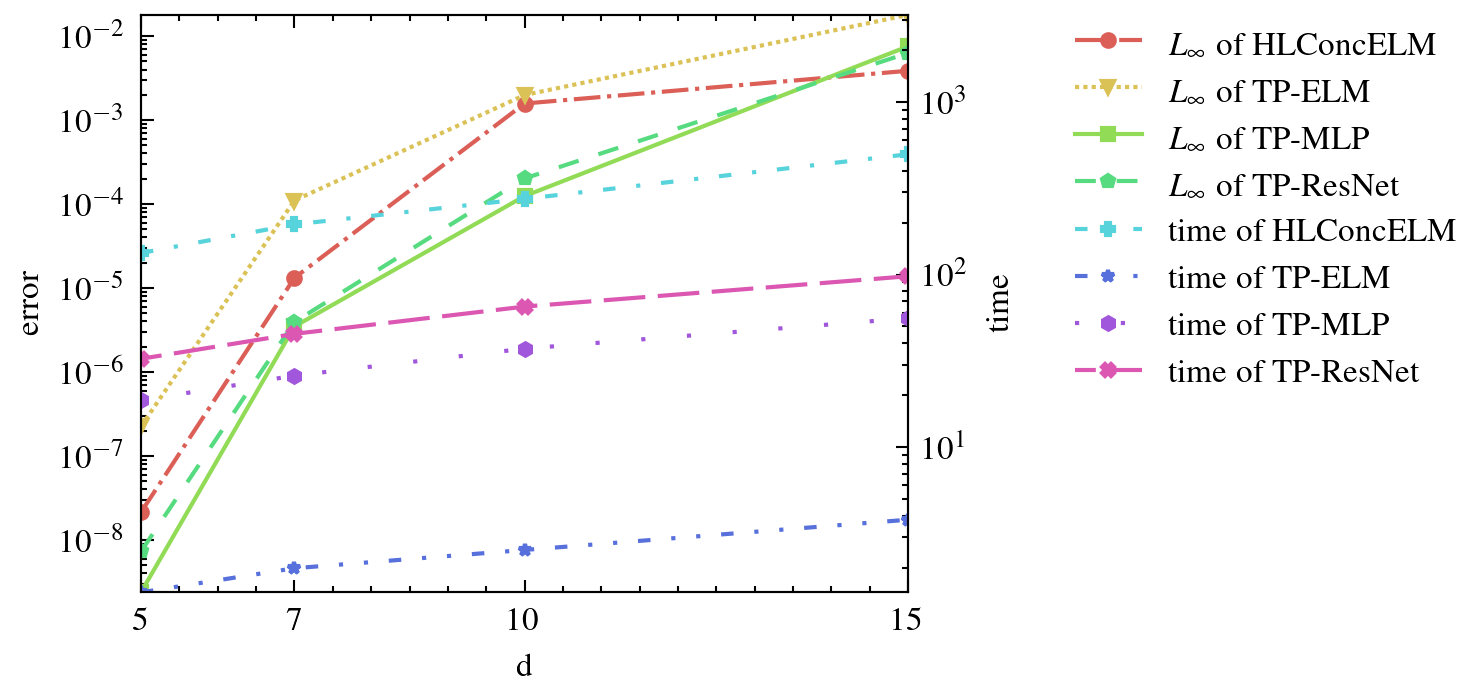}
    \end{minipage}
    \caption{High-dimensional Poisson equation: The $L_{\infty}$ errors and training time for the TPNet and HLConcELM \cite{ni2023numerical} when solving the equation~\eqref{eq:Helmholtz_equation}.}
    \label{fig:error_PoiHD}
\end{figure}

\begin{table}[htp]
    \begin{center}
        \caption{High-dimensional Poisson equation: Performance comparison of approximation errors and training time for the TPNet and HLConcELM \cite{ni2023numerical} when solving Equation~\eqref{eq:HDPoisson_equation}.}
        \setlength\tabcolsep{1pt}
        \tiny{
        \begin{tabular}{ccccccccccccc}
            \hline\noalign{\smallskip}
            \multirow{2}{*}{$M$} & \multicolumn{3}{c}{HLConcELM \cite{ni2023numerical}} & \multicolumn{3}{c}{TP-ELM} & \multicolumn{3}{c}{TP-MLP} & \multicolumn{2}{c}{TP-ResNet}  \\
            & $L_{\infty}$ & $L_{2}$ & time (s) & $L_{\infty}$ & $L_{2}$ & time (s) & $L_{\infty}$ & $L_{2}$ & time (s) & $L_{\infty}$ & $L_{2}$  & time (s)     \\
            \hline
            5   & 2.13E-08 & 7.83E-08 & 133.2592   & 2.24E-07 & 1.38E-06 & 1.4383  & 2.37E-09 & 1.32E-08 & 18.6470 & 7.28E-09 & 2.09E-08 & 32.4662  \\
            7   & 1.32E-05 & 1.02E-04 & 195.1425   & 1.10E-04 & 9.21E-04 & 1.9843  & 3.48E-06 & 3.38E-05 & 25.9330 & 3.93E-06 & 3.72E-05 & 45.2929  \\
            10  & 1.57E-03 & 1.41E-02 & 274.3841   & 1.99E-03 & 2.45E-02 & 2.5325  & 1.24E-04 & 1.45E-03 & 37.1885 & 2.02E-04 & 1.40E-03 & 65.1845  \\
            15  & 3.87E-03 & 6.53E-02 & 497.8459   & 1.79E-02 & 3.15E-01 & 3.7921  & 7.56E-03 & 1.17E-01 & 55.7701 & 6.30E-03 & 1.15E-01 & 97.7254  \\
            \hline
        \end{tabular}
        }
        \label{tab:error_PoiHD}
    \end{center}
\end{table}

\subsection{Block-Time Marching Test}
\label{sec:BTM}
In this subsection, we illustrate the efficiency of the block-time marching (BTM) strategy using a diffusion equation. The equation is given by:
\begin{equation}
    \label{eq:Diffusion_equation}
    \begin{array}{r@{}l}
        \left\{
        \begin{aligned}
            \frac{\partial u}{\partial t} - \nu \frac{\partial^2 u}{\partial x^2} & = f(x, t), &  & (x, t) \in (0,5) \times (0, t_f],          \\
            u(0, t)             & = g_1(t),        &  & t \in[0, t_f], \\
            u(5, t)             & = g_2(t),        &  & t \in[0, t_f], \\
            u(x, 0)             & = h(x),        &  & x \in (0,5). \\
        \end{aligned}
        \right.
    \end{array}
\end{equation}

Here, $f(x ,t)$ is the source term, $\nu = 0.01$ is a constant, and $g_1(t)$, $g_2(t)$, and $h(x)$ represent the boundary and initial conditions, respectively. The exact solution is:
\begin{equation}
    \label{eq:Diffusion_equation_solution}
    \begin{array}{r@{}l}
        \begin{aligned}
    u(x, t)=& \left[2 \cos\left(\pi x+\frac{\pi}{5}\right) + \frac{3}{2} \cos\left(2\pi x - \frac{3\pi}{5}\right)\right] \\
    & \times \left[2 \cos\left(\pi t+\frac{\pi}{5}\right) + \frac{3}{2} \cos\left(2\pi t - \frac{3\pi}{5}\right)\right].
        \end{aligned}
    \end{array}
\end{equation}

To solve the diffusion equation for long-time simulations with $t_f = 10$, we employ the BTM strategy. This method divides the simulation time into discrete blocks, each of size $dt = 1$, making computations more manageable. We set the number of time blocks to $D_t = 10$ and use the same neural network architecture for each block, ensuring a consistent learning approach across all stages. The number of basis functions is set to $M = 10,000$, and each hidden and output layer in the subnetworks of TPNet contains $100$ neurons, providing a rich representation of the solution space.

Figure \ref{fig:error_Diff_tf10} and Table \ref{tab:error_Diff_tf10} compare the performance of TPNet and HLConcELM in solving this problem, both with and without BTM. The error curves demonstrate that TP-ResNet consistently achieves the lowest errors, regardless of whether BTM is applied. TP-MLP shows marginally smaller errors than HLConcELM, whereas TP-ELM exhibits the highest errors.  The use of BTM significantly reduces errors, highlighting its efficacy in long-time simulations. Additionally, the training time curves reveal that TPNet trains substantially faster than HLConcELM. When BTM is utilized, the training time of HLConcELM increases sharply, while TPNet completes training in under $31$ seconds.

Figure \ref{fig:heatmap3_diffusion_10} presents the results of applying TP-ResNet using BTM to the diffusion equation. The heatmap displays the absolute error between the exact and approximate solutions, which is on the order of $10^{-8}$, indicating that TP-ResNet accurately captures the solution. The $L_{\infty}$ error, $L_2$ error, and training time for TP-ResNet are $1.92 \times 10^{-8}$, $1.61 \times 10^{-6}$, and $31.0119$ seconds, respectively. These results confirm that TP-ResNet efficiently provides high-precision solutions for long-time simulations of diffusion equations.

\begin{figure}[htbp]
    \begin{minipage}{0.9\linewidth}
        \centering
        \includegraphics[width=0.8\textwidth]{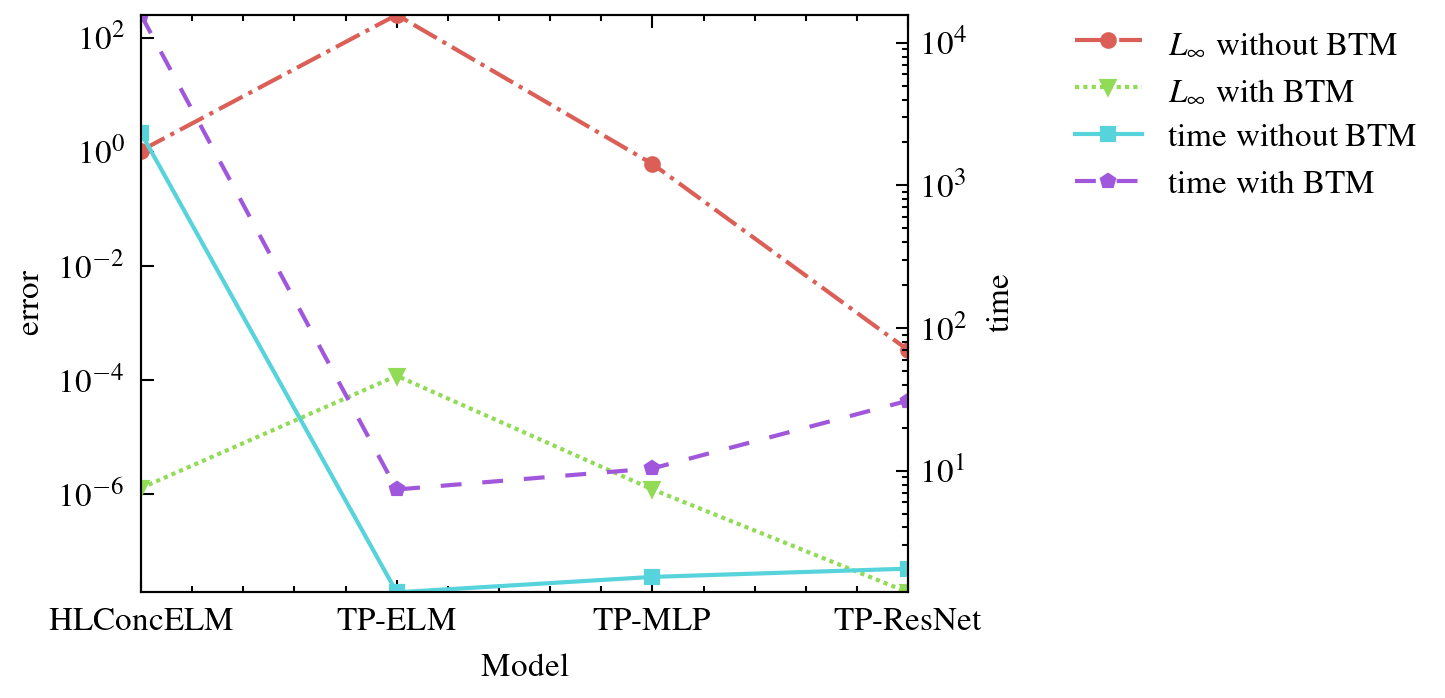}
    \end{minipage}
    \caption{Diffusion equation: The $L_{\infty}$ errors and training time for the TPNet and HLConcELM \cite{ni2023numerical} when solving the equation~\eqref{eq:Diffusion_equation}.}
    \label{fig:error_Diff_tf10}
\end{figure}

\begin{table}[htp]
    \begin{center}
        \caption{Diffusion equation: Approximation errors and training time for TPNet and HLConcELM \cite{ni2023numerical} when solving Equation~\eqref{eq:Diffusion_equation}.}
        \setlength\tabcolsep{1pt}
        \tiny{
        \begin{tabular}{ccccccc}
            \hline\noalign{\smallskip}
            \multirow{2}{*}{Model} & \multicolumn{3}{c}{Without BTM} & \multicolumn{3}{c}{With BTM}  \\
            & $L_{\infty}$ & $L_{2}$ & time (s) & $L_{\infty}$ & $L_{2}$ & time (s)    \\
            \hline
            HLConcELM \cite{ni2023numerical}   & 1.05E+00 & 3.37E+01 & 2333.6977   & 1.29E-06 & 1.15E-04 & 15682.4688  \\
            TP-ELM                             & 2.49E+02 & 9.21E+03 & 1.4039      & 1.19E-04 & 1.19E-02 & 7.3839  \\
            TP-MLP                             & 6.10E-01 & 2.02E+01 & 1.8010      & 1.22E-06 & 1.29E-04 & 10.3772  \\
            TP-ResNet                          & 3.35E-04 & 1.07E-02 & 2.0580      & 1.92E-08 & 1.61E-06 & 31.0119  \\
            \hline
        \end{tabular}
        }
        \label{tab:error_Diff_tf10}
    \end{center}
\end{table}

\begin{figure}[htp]
    \centering
    \includegraphics[width=1\textwidth]{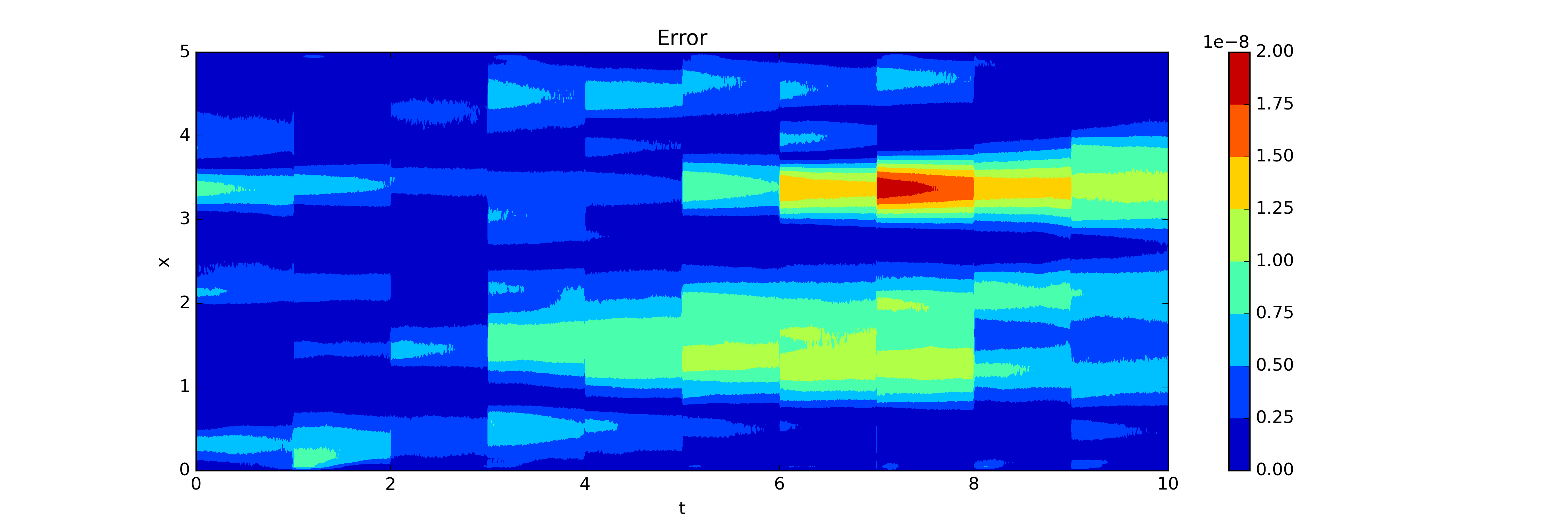}
    \caption{Diffusion equation: Absolute error between the exact solution and approximate solutions to the diffusion Equation~\eqref{eq:Diffusion_equation} with $M=10,000$ and $t_f=10$.}
    \label{fig:heatmap3_diffusion_10}
\end{figure}

% 总结 Conclusions
\section{Conclusions and Discussions}
\label{sec:summary}
In this work, we propose a novel neural network architecture called TPNet for approximating functions and solving both linear and nonlinear PDEs. TPNet constructs a set of basis functions by computing the tensor product of outputs from two subnetworks, each of which represents a distinct set of basis functions. The target function is then expressed as a linear combination of these basis functions with the coefficients determined determined by the least squares method. Unlike traditional backpropagation-based training, this approach significantly improves computational efficiency while maintaining high solution accuracy. 

To train the neural network, we employ a set of collocation points sampled within the problem domain. In our experiments, the two subnetworks of TPNet adopt identical architectures inspired by ELM, MLP, and ResNet. A detailed comparison is conducted between TPNet and HLConcELM. Although uniform grids of collocation points are primarily adopted for simplicity (except in the high-dimensional Poisson problem), the method remains equally effective with randomly sampled points. This adaptability enables TPNet to handle geometrically complex domains beyond regular regions. Furthermore, the required derivatives in the PDEs are computed using automatic differentiation. Notably, TPNet demonstrates higher efficiency in automatic differentiation compared to HLConcELM. This advantage stems from TPNet’s compact network architecture, whereas HLConcELM requires a substantially larger network. Due to the effect of computational cost with network size in automatic differentiation, TPNet achieves significant efficiency gains, as numerically validated in our experiments.

Furthermore, our training strategy integrates collocation points not only within the domain interior but also at boundary and initial condition locations. By substituting these points into the governing Equations \eqref{eq:linear_system_left} and \eqref{eq:linear_system_right} and determining the coefficients via the least squares method, TPNet ensures that physical constraints and initial conditions are satisfied. This systematic enforcement ensures high accuracy and robustness in the PDE solutions.

For long-time simulations of time-dependent PDEs, we propose a block time-marching strategy that divides the temporal domain into uniform sequential blocks. This decomposition transforms the original problem into a series of initial-boundary value subproblems, each assigned to a discrete time block. TPNet solves these subproblems temporally in sequence, improving computational tractability and ensuring stable neural network training. The synergy between TPNet’s computational efficiency and the block time-marching strategy enables accurate and scalable long-time simulations, as numerically demonstrated in our diffusion equation case study.

Despite the promising results achieved with TPNet, several aspects warrant further investigation to enhance its performance. First, while our numerical methods are effective, they do not yet achieve machine-level precision. Future work should explore strategies to further refine the algorithm’s accuracy. Second, our experiments indicate a correlation between accuracy and the number of basis functions, but accuracy improvements plateau beyond a certain threshold. Understanding this relationship is crucial for systematically enhancing the method’s precision. Finally, the neural network architecture plays a key role in solution accuracy, yet many architectures fail to yield better results. This suggests potential for improving TPNet’s representational capacity through optimized network designs. In summary, TPNet has demonstrated strong potential in solving PDEs efficiently and accurately. However, further research is required to address these challenges and fully optimize the method for broader applications.

% 致谢 Acknowledgement
% \clearpage
\section*{Acknowledgment}
This research is partially supported by the National Natural Science Foundation of China (No.U25A20200, No.12371434), and the National Key R \& D Program of China (No.2022YFE03040002).

\section*{Data Availability Statement}
The data that support the findings of this study are available from the corresponding author upon reasonable request.

\section*{Conflict of Interest}

The authors have no conflicts to disclose.

%% Loading bibliography style file
% \bibliographystyle{model1-num-names}
% \bibliographystyle{cas-model2-names}
% \bibliographystyle{elsarticle-num}

\bibliographystyle{unsrt}
% Loading bibliography database
% \bibliography{./refs}
\bibliography{./ref}
\end{document}